\newtheorem{theorem}{Theorem}
\title{Incremental Permutation Feature Importance (iPFI): \\ Towards Online Explanations on Data Streams}
\author {
    Fabian Fumagalli,\textsuperscript{\rm 1,*}
    Maximilian Muschalik,\textsuperscript{\rm 2,*}
    Eyke Hüllermeier,\textsuperscript{\rm 2}
    Barbara Hammer\textsuperscript{\rm 1}
}
\begin{document}

\maketitle

\begin{abstract}
Explainable Artificial Intelligence (XAI) has mainly focused on static learning scenarios so far. We are interested in dynamic scenarios where data is sampled progressively, and learning is done in an incremental rather than a batch mode. We seek efficient incremental algorithms for computing feature importance (FI) measures, specifically, an incremental FI measure based on feature marginalization of absent features similar to permutation feature importance (PFI). We propose an efficient, model-agnostic algorithm called iPFI to estimate this measure incrementally and under dynamic modeling conditions including concept drift. We prove theoretical guarantees on the approximation quality in terms of expectation and variance. To validate our theoretical findings and the efficacy of our approaches compared to traditional batch PFI, we conduct multiple experimental studies on benchmark data with and without concept drift.
\end{abstract}

\section{Introduction}

Online learning from dynamic data streams is a prevalent machine learning (ML) approach for various application domains \cite{bahri_data_2021}.
For instance, predicting energy consumption for individual households can foster energy-saving strategies such as load-shifting.
Concept drift resulting from environmental changes, such as pandemic-induced lock-downs, drastically impacts the energy consumption patterns necessitating online ML \cite{GarciaMartin.2019}.
Explaining these predictions yields a greater understanding of an individual's energy use and enables prescriptive modeling for further energy-saving measures \cite{Wastensteiner.2021}.
For black-box machine learning methods, so-called post-hoc XAI methods seek to explain single predictions or entire models in terms of the contribution of specific features \cite{Adadi.2018}.
In this paper, we are interested in feature importance (FI) as a global assessment of features, which indicates their respective relevance to the given task and model.
A prominent representative of global FI is the permutation feature importance (PFI) \cite{breiman_random_2001}, which, in its original form, requires a holistic view of the entire dataset in a static batch learning environment.
More generally, explainable artificial intelligence (XAI) has been studied mainly in the batch setting, where learning algorithms operate on static datasets. 
In scenarios where data does not fit into memory or computation time is strictly limited, like in progressive data science for big datasets \cite{Turkay2018ProgressiveDS}, or rapid online learning from data streams \cite{bahri_data_2021}, this assumption prohibits the use of traditional FI or XAI measures.
Incremental, time- and memory-efficient implementations that provide anytime results have received much attention in recent years \cite{losing_incremental_2018,montiel_river_2020}.
In this article, we are interested in efficient incremental algorithms for FI.
Especially in the context of drifting data distributions, this task is particularly relevant --- but also challenging, as many common FI methods are already computationally costly in the batch setting.

\begin{figure}
    \centering
    \includegraphics[width=\columnwidth]{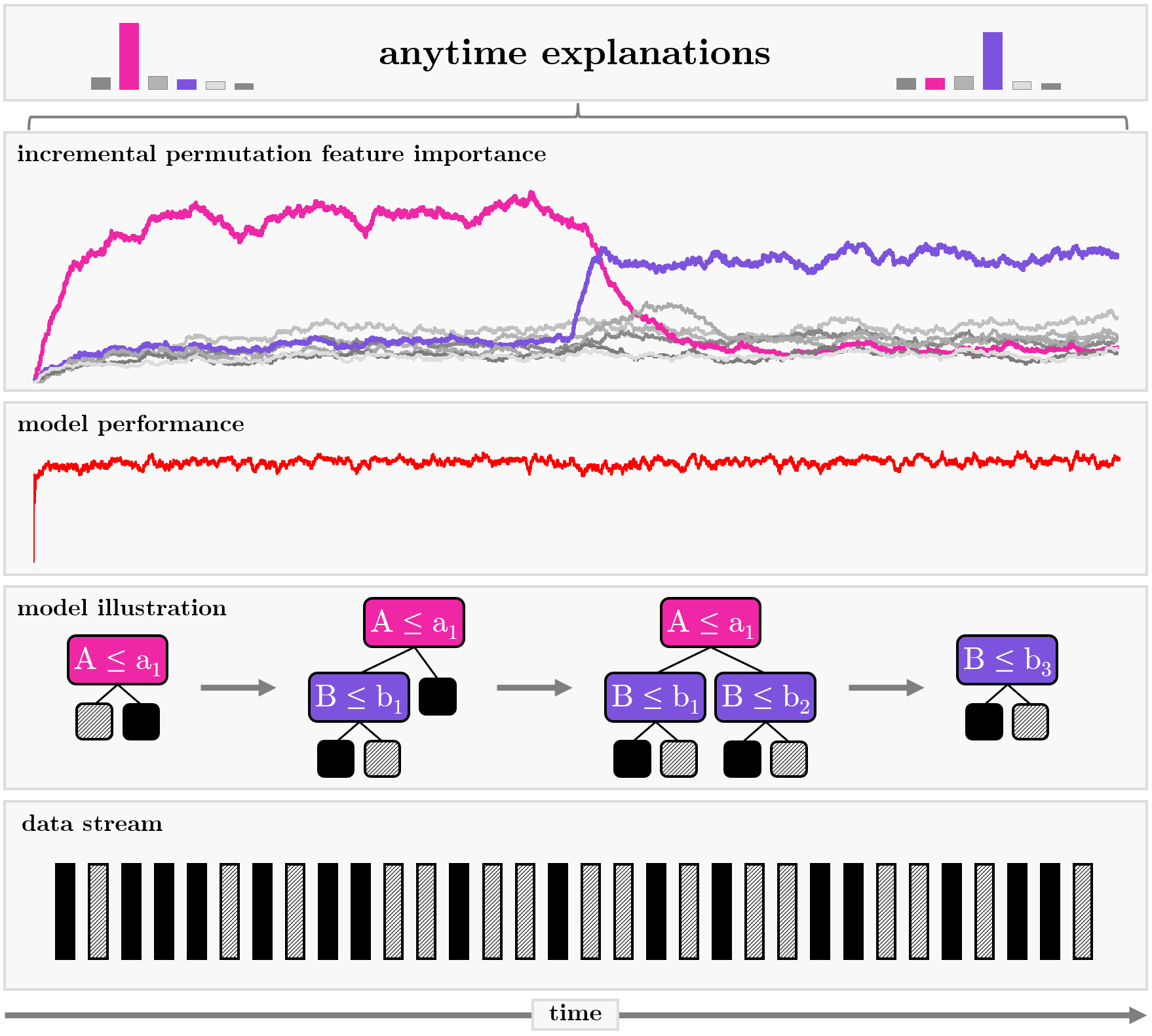}
    \caption{Incremental feature importance on an electricity-data stream to create anytime explanations. Concept drift in the data (rectangles) lead to model adaption without visible changes in the model's performance.}
    \label{fig:my_label}
\end{figure}

\paragraph{Contribution.}
Our core contributions include:
\begin{itemize}
    \item We establish the concrete connection of model reliance \cite{JMLR:v20:18-760} and permutation tests to conclude that only properly scaled permutation tests are unbiased estimates of global FI.

    \item We introduce an \emph{incremental estimator for PFI} (iPFI) with two sampling strategies to create marginal feature distributions in an incremental learning scenario.
    \item We provide theoretical guarantees regarding bias, variance, and approximation error, which can be controlled by a single sensitivity parameter, and analyze the estimation quality in the case of a static and an incrementally learned model.
    \item We implement iPFI and conduct experiments on its approximation quality compared to batch permutation tests, as well as its ability to efficiently provide anytime FI values under different types of concept drift.
\end{itemize}
All experiments and algorithms are publicly available and integrated into the well-known incremental learning framework \emph{river} \cite{montiel_river_2020}.

\paragraph{Related Work.}
A variety of model-agnostic local FI methods \cite{Ribeiro_2016,NIPS2017_7062,Lundberg2020,Covert.2021b} exist that provide relevance-values for single instances. 
In addition, model-specific variants have been proposed for neural networks \cite{bach_2015,selvaraju_2017}.
PFI and its extensions  \cite{molnar2020model,konig2021relative} are among global FI methods that provide relevance-values across all instances.

SAGE, a popular Shapley-based approach, has been proposed and compared with existing methods \cite{Covert_Lundberg_Lee_2020}.
As calculating FI values is computationally expensive, especially for Shapley-based methods, more efficient approaches such as FastSHAP \cite{jethani2021fastshap} have been introduced.
Yet, none of the above methods and extensions natively support an incremental or dynamic setting in which the underlying model and its FI can rapidly change due to concept drift.

An initial approach to explaining model changes  by computing differences in FI utilizing drift detection methods is \cite{Muschalik.2022}.
However, this does not constitute an incremental FI measure.
The explanations are created with a time delay and without efficient anytime calculations.
A first step towards anytime FI values has been proposed for online Random Forests, where separate test data is used from online bagging to compute changes in impurity and accuracy \cite{Cassidy.2014}.
While this method is limited to online random forests, it does not provide theoretical guarantees or an incremental approach.
Similar to batch learning, incremental FI is also relevant to the field of incremental feature selection, where FI is calculated periodically with a sliding window to retain features for the incrementally fitted model \cite{Barddal.2019,Yuan.2018}.

In this work, we provide a truly incremental FI measure whose time sensitivity can be controlled by a single smoothing parameter. Moreover, we establish necessary theoretical guarantees on its approximation quality.

\section{Global Feature Importance}

We consider a supervised learning scenario, where $\mathscr X$ is the feature space and $\mathscr Y$ the target space, e.g., $\mathscr X = \mathbb{R}^d$ and $\mathscr Y = \mathbb{R}$ or $\mathscr Y = \{0,1\}$.
Let $h: \mathscr X \to \mathscr Y$ be a model, which is learned from a set or stream of observed data points $z=(x,y) \in \mathscr X \times \mathscr Y$.
Let $D = \{1,\dots,d\}$ be the set of feature indices for the vector-wise feature representations of $x = (x^{(i)}:i \in D) \in \mathscr X$. 
Consider a subset $S \subset D$ and its complement $\bar S := D \setminus S$, which partitions the features, and denote $x^{(S)} := (x^{(i)}: i \in S)$ as the feature subset of $S$ for a sample $x$. 
We write $h(x^{(\bar S)},x^{(S)}) := h(x)$ to distinguish between features from $\bar S$ and $S$. For the basic setting, we assume that $N$ observations are drawn independently and identically distributed (iid) from the joint distribution of unknown random variables $(X,Y)$ and denote by $\mathbb{P}_S$ the marginal distribution of the features in $S$, i.e.,
$z_n := (x_n,y_n)$ from $Z_n := (X_n,Y_n) \overset{iid}{\sim} \mathbb{P}_{(X,Y)}$ and $x_n^{(S)} \text{ from } X_n^{(S)} \overset{iid}{\sim} \mathbb{P}_S$
for samples $n=1,\dots,N$.

\emph{Feature importance}  refers to the relevance of a set of features $S$ for a model $h$.
To quantify FI, the key idea of measures such as PFI is to compare the model's performance when using only features in $\bar S$ with the performance when using all features in $D = S \cup \bar S$. 
The idea is that the ``removal'' of an important feature (i.e., the feature is not provided to a model) substantially decreases a model's performance.
The model performance or risk is measured based on a norm $\Vert \cdot\Vert :\mathscr Y \to \mathbb{R}$ on $\mathscr Y$, e.g., the Euclidean norm, as  $\mathbb{E}_{(X,Y)}[\, \|h(X)-Y\| \, ]$.

As the model is trained on all features and retraining is computationally expensive, a common method to restrict $h$ to $\bar S$ is to marginalize $h$ over the features in $S$.
We denote the marginalized risk
\begin{equation}\label{eq::marginalized-risk}
    f_S \big(x^{(\bar S)},y \big) := \mathbb{E}_{\tilde X \sim \mathbb{P}_S} \left[\Vert h(x^{(\bar S)},\tilde X)-y\Vert\right] .
\end{equation}
A popular way to define FI for a model $h$ and a feature set $S$ is to compare the marginalized risk with the model's inherent risk \cite{Covert_Lundberg_Lee_2020}. 
For a model $h$ and a subset $S \subset D$, FI becomes
\begin{equation*}
    \phi^{(S)}(h) := \underbrace{\mathbb{E}_{(X,Y)} \Big[f_S(X^{(\bar S)},Y) \Big]}_{\text{marginalized risk over $\mathbb{P}_S$}} - \underbrace{\mathbb{E}_{(X,Y)} \Big[\Vert h(X)-Y\Vert \Big]}_{\text{risk}}.
\end{equation*}
This FI measures the \emph{increase in risk} when the features in $S$ are marginalized 
\cite{Covert_Lundberg_Lee_2020}.

\subsubsection{Empirical estimation of FI.}
Given observations $(x_1,y_1),\dots,(x_N,y_N)$, we estimate the FI for a given model $h$ with the canonical estimator
\begin{equation}\label{eq::batch-fi-baseline}
    \hat\phi^{(S)}_{\varphi} := \frac 1 N\sum_{n=1}^N \hat\lambda^{(S)}(x_n,x_{\varphi(n)},y_n),
\end{equation}
where $\varphi:\{1,\dots,N\} \to \{1,\dots,N\}$ represents the realization of a (possibly random) sampling strategy that decides for each observation which observation should be taken to approximate $X^{(S)}$
and
\begin{equation*}
\hat\lambda^{(S)}(x_n,x_m,y_n) := \Vert h(x_n^{(\bar S)},x^{(S)}_m)-y_n\Vert- \Vert h(x_n)-y_n\Vert.   
\end{equation*}
Given the iid assumption, it is clear that due to $X_n \perp X_{n'}$ for $n\neq n'$, the estimator is an unbiased estimator of the FI $\phi^{(S)}(h)$, if $\varphi(n) \neq n$ for all $n=1,\dots,N$.
In the case of $\varphi(n)=n$, the term in the sum is zero as well as its expectation, which implies $\mathbb{E}[\hat\phi^{(S)}_{\varphi}] \leq \phi^{(S)}(h)$ for any $\varphi$.
We will now discuss a well understood choice of feature subsets $S \subset D$, sampling strategy $\varphi$ and two estimators for $\phi^{(S)}(h)$.

\subsection{Permutation Feature Importance (PFI)}\label{sec::batch-pfi}
A popular example of FI is the well-known PFI \cite{breiman_random_2001} that measures the importance of each feature $j \in D$ by using a set $S_j := \{j\}$.
More precisely, the FI for each feature $j \in D$ is given by $\phi^{(S_{j})}$ with sets  $S_{j} = \{j\}$  and their complement $\bar S_{j}= D \setminus \{j\}$.
The sampling strategy $\varphi$ used in PFI samples uniformly generated permutations $\varphi \in \mathfrak S_N$ over the set $\{1,\dots,N\}$, where each permutation has a probability of $1/N!$.

\subsubsection{(Empirical) PFI.}

Permutation tests, as proposed initially by \citet{breiman_random_2001}, effectively approximate $\mathbb{E}_{\varphi}[\hat\phi^{(S_j)}_\varphi]$ by averaging over $M$ uniformly sampled random permutations.
We introduce a scaled version of the initially proposed method as the PFI estimator, that is
\begin{equation}\label{eq::approx-sampling-pfi}
 \text{\textbf{PFI}: } \hat\phi^{(S_j)} := \frac N {N-1}\underbrace{\frac 1 M \sum_{m=1}^M \hat\phi^{(S_j)}_{\varphi_m}}_{\approx\mathbb{E}_{\varphi}[\hat\phi^{(S_j)}_\varphi]}
\end{equation}
with $\varphi_1,\dots,\varphi_m \overset{iid}{\sim} \text{unif}(\mathfrak S_N)$.
As discussed above, the estimator $\hat\phi^{(S_j)}_{\varphi}$ for a given $\varphi$ is an unbiased estimator for the FI $\phi^{(S_j)}(h)$, if the permutation is a derangement.  
In the following, we show that the above estimator is an unbiased estimator of FI, in contrast to the  original method without scaling.

\subsubsection{Expected PFI.}
The PFI estimator highly depends on the sampled permutations.
Therefore, we take the expectation over $\varphi$ to analyze its theoretical properties.
We can show that the expectation is the model reliance $\bar\phi^{(S_j)} := \hat e_{\text{switch}}-\hat e_{\text{orig}}$ which compares the model error $\hat e_{\text{orig}}=\frac 1 {N} \sum_{n=1}^N\Vert h(x_n)-y_n\Vert$ and 
the error of the model if averaged over all feature instantiations
$\hat e_{\text{switch}}=\frac 1 {N(N-1)} \sum_{n=1}^N\sum_{m \neq n}\Vert h(x_n^{(\bar{S_j})},x_m^{(S_j)})-y_n\Vert$. 
This quantity has been introduced and extensively studied by \citet{JMLR:v20:18-760}.
\footnote{As compared to \citet{JMLR:v20:18-760},  we consider the loss function $L(f,(y,x_n,x_m)) := \Vert h(x_n^{(\bar{S_j})},x_m^{(S_j)}) - y \Vert$ and denote $\bar\phi^{(S_j)} := \widehat{MR}_{\text{difference}}(h)$ in our case.}

\begin{theorem}\label{thm::batch-pfi}
The expected PFI (model reliance) can be rewritten as a normalized expectation over uniformly random permutations, i.e.
\begin{align}\label{eq::pfi-batch}
\bar\phi^{(S_j)} = \frac N {N-1}\mathbb{E}_{\varphi \sim \text{unif}(\mathfrak S_N)} \left[\hat\phi^{(S_j)}_{\varphi} \right].
\end{align}
\end{theorem}
Due to space restrictions, all proof are deferred to the supplementary material.
Both $\hat e_{\text{switch}}$ and $\hat e_{\text{orig}}$ as well as the estimator $\bar\phi^{(S_j)}$ are U-statistics, which implies unbiasedness, asymptotic normality and finite sample boundaries under weak conditions \cite{JMLR:v20:18-760}.
The variance can thus be directly computed and it is easy to show that $\mathbb{V}[\bar\phi^{(S_j)}]= \mathcal{O}(1/N)$, which by Chebyshev's inequality implies a bound on the approximation error as $\mathbb{P}(\vert \bar\phi^{(S_j)}- \phi^{(S_j)}(h)\vert > \epsilon) = \mathcal{O}(1/N)$.
Hence, the approximation error of the expected PFI is directly controlled by the number of observations $N$ used for computation.
The link between permutation tests and the U-statistic $\bar\phi^{(S_j)}$ was already discussed by \citet[Appendix A.3]{JMLR:v20:18-760}, where it was shown that the sum over permutations without fixed points is proportional to $\hat e_{\text{switch}}$.
The biased estimator $\frac 1 M \sum_{m=1}^M \hat\phi^{(S_j)}_{\varphi_m}$ appears in \cite{breiman_random_2001,JMLR:v20:18-760,Gregorutti_Michel_Saint-Pierre_2017}.
However, to our knowledge, the unbiased version in (\ref{eq::approx-sampling-pfi}) has not yet been introduced, and Theorem \ref{thm::batch-pfi} directly yields the unique normalizing constant $\frac N {N-1}$, which ensures that the estimator is unbiased.
In particular, Theorem \ref{thm::batch-pfi} justifies to average over repeatedly sampled realizations of $\varphi$ in order to approximate the computationally prohibitive estimator $\bar\phi^{(S_j)}$.
In the following, we will pick up this notion when constructing an incremental FI estimator.

\section{Incremental Permutation Feature Importance}
We now consider a sequence of models $(h_t)_{t\in \mathbb{N}}$ from an incremental learning algorithm.
At time $t$ the observed data is $\{(x_0,y_0),\dots,(x_t,y_t)\}$.
The model is incrementally learned over time, such that at time $t$ the observation $(x_t,y_t)$ is used to update $h_t$ to $h_{t+1}$.
Our goal is to efficiently provide an estimate of PFI at each time step $t$ for each feature $j \in D$ using subsets $S_j := \{j\}$.
Note that our results can immediately be extended to arbitrary feature subsets $S \subset D$.

In the following, we construct an efficient incremental estimator for PFI.
We first discuss how (\ref{eq::batch-fi-baseline}) can be efficiently approximated in the incremental learning scenario, given a sampling strategy $\varphi_t$. 
In the sequel, we will rely on a random sampling strategy which is specifically suitable for the incremental setting and easier to implement than permutation-based approaches. 
Note that a permutation-based approach at time $t$ is difficult to replicate in the incremental setting, as at time $s<t$ not all samples until time $t$ are available.
As the model changes over time, naively computing (\ref{eq::batch-fi-baseline}) at each time step $t$ using $N$ previous observations results in $N$ model evaluations per time step.
Instead, we aim for an estimator that averages the terms in (\ref{eq::batch-fi-baseline}) over time rather than over multiple data points at one time step, i.e., we evaluate the current model only twice to compute the time-dependent quantity
\begin{equation*}
    \hat\lambda^{(S_j)}_t(x_t,x_{\varphi_t},y_t) := \Vert h_t(x_t^{(\bar{S_j})},x^{(S_j)}_{\varphi_t})-y_t\Vert- \Vert h_t(x_t)-y_t\Vert,
\end{equation*}
where $\varphi_t: \Omega \to \{0,\dots,t-1\}$ is a sampling strategy to select a previous observation.
We propose to average these calculations over time by using exponential smoothing, i.e.
\begin{equation*}
        \text{\textbf{iPFI}: } \hat\phi^{(S_j)}_t := (1-\alpha)\hat\phi^{(S_j)}_{t-1} + \alpha \hat\lambda^{(S_j)}_t(x_t,x_{\varphi_t},y_t),
\end{equation*}
for $t>t_0$, $\hat\phi^{(S_j)}_{t_0} := \hat\lambda^{(S_j)}_{t_0}(x_t,x_{\varphi_{t_0}},y_{t_0})$, and $\alpha\in(0,1)$.
The parameter $\alpha$ is a hyperparameter that should be chosen based on the application.
Note that a specific choice of $\alpha$ corresponds to a window size $N$, where $\alpha = \frac 2 {N+1}$ based on the well-known conversion formula, see e.g.\  \cite[p.73]{nahmias2015production}.
Given a realization $\varphi_s$, observations $z_s := (x_s,y_s)$ from iid $Z_s := (X_s,Y_s) \overset{iid}{\sim} \mathbb{P}_{(X,Y)}$ and $x_s^{(S_j)}$ from $X_s^{(S_j)} \overset{iid}{\sim} \mathbb{P}_{S_j}$, each $\hat\lambda_s^{(S_j)}$ is an unbiased estimate of $\phi^{(S_j)}(h_s)$.
We further require $\varphi_s \perp (X,Y)$ and denote
\begin{equation}\label{eq::inc-fi-sampling-procedure}
    \varphi_s: \Omega \to \{0,\dots,s-1\} \text{ with } p_{s,r} := \mathbb{P}(\varphi_s = r),
\end{equation}
for $s=t_0,\dots,t$ to select previous observations.
Note that $t_0>0$ is the first time step where $\hat\phi^{(S_j)}_t$ can be computed, as we need previous observations for the sampling process.
In the following, we assume that the sampling strategy $(\varphi_s)_{t_0\leq s\leq t}$ is fixed and clear from the context, and thus omit the dependence in $\hat\phi^{(S_j)}_t$.
We illustrate one explanation step at time $t$ in Algorithm \ref{alg:incremental_fi}.
This directly corresponds to (\ref{eq::approx-sampling-pfi}) with $M=1$ and can be extended to $M>1$ by repeatedly running the procedure in parallel and averaging the results.
Next, we discuss two possible sampling strategies.

\begin{algorithm}[t]
\caption{iPFI explanation at time $t$ for feature $j$}
\label{alg:incremental_fi}
\begin{algorithmic}[1]
\Require: $\alpha \in (0,1)$, sampling strategy $\varphi_t$, and $\hat\phi^{(S_j)}_{t-1}$.
\Procedure{explainOne}{$h_t,x_t,y_t,j$}
    \State $x_{s} \gets \text{Sample}(\varphi_t)$ 
    \State $\hat\lambda^{(S_j)}_{t} \gets \Vert h_{t}(x_t^{(\bar{S_j})},x_{s}^{(S_j)})-y_t\Vert -\Vert h_{t}(x_t)-y_t\Vert$ 
    \State$ \hat\phi^{(S_j)}_t \gets (1-\alpha) \cdot \hat\phi^{(S_j)}_{t-1} + \alpha \cdot \hat\lambda^{(S_j)}_{t}$ 
    \State $\varphi_{t+1} \gets \text{UpdateSampler}(\varphi_{t},x_t)$ 
\EndProcedure
\end{algorithmic}
\end{algorithm}

\subsection{Incremental Sampling Strategies $\varphi$}\label{sec::sampling-procedures}
Since random permutations cannot easily be realized in an incremental setting as they would require knowledge of future events, we now present two alternative types of sampling strategies.
We formalize $(\varphi_s)_{t_0\leq s\leq t}$ to choose the previous observation $r$ at time $s$ for the calculation in $\hat\lambda_s^{(S_j)}$.
To do so, we will specify the probabilities $p_{s,r}$ in (\ref{eq::inc-fi-sampling-procedure}).

\subsubsection{Uniform Sampling}
In uniform sampling we assume that each previous observation is equally likely to be sampled at time $s$, i.e., $p_{s,r}=1/s$ for $s=t_0,\dots,t$ and $r=0,\dots,s-1$.
It can be naively implemented by storing all previous observations and uniformly sampling at each time step.
However, when memory is limited, it can be implemented with histograms for features of known cardinality.
For others, a reservoir of fixed length can be maintained, known as reservoir sampling \cite{10.1145/3147.3165_vitter}.
The probability of a new observation to be included in the reservoir then decreases over time.
Clearly, observations are drawn independently, but can be sampled more than once.
In a data stream scenario, where changes to the underlying data distribution occur over time, the uniform sampling strategy may be inappropriate, and sampling strategies that prefer recent observations may be better suited.

\subsubsection{Geometric Sampling}
Geometric sampling arises from the idea to maintain a reservoir of size $L$, which is updated by a new observation at each time step by randomly replacing a reservoir observation with the newly observed one.
Until time $t_0$ the first $L$ observations are stored in the reservoir.
At each sampling step ($t \geq t_0$) an observation is uniformly chosen from the reservoir with probability $p := 1/L$.
Independently, a sample from the reservoir is selected with the same probability $p := 1/L$ for replacement with the new observation. 
The resulting probabilities are of the geometric form~$p_{s,r}=p(1-p)^{s-r-1}$ for $r\geq t_0$ and $p_{s,r}=p(1-p)^{s-t_0}$ for~$r < t_0$.
Clearly, the geometric sampling strategy yields increasing probabilities for more recent observations and we demonstrate in our experiments that this can be beneficial in scenarios with concept drift.

\subsection{Theoretical Results of Estimation Quality}\label{sec::constant-h}
The estimator $\hat\phi^{(S_j)}_t$ picks up the notion of the PFI estimator $\hat\phi^{(S_j)}$ in (\ref{eq::approx-sampling-pfi}), which approximates the expectation over the random sampling strategy $(\varphi)_{t_0 \leq s \leq t}$ by averaging repeated realizations.
While $\hat\phi^{(S_j)}_t$ only considers one realization of the sampling strategy, it is easy to extend the approach in the incremental learning scenario by computing the estimator $\hat\phi^{(S_j)}_t$ in multiple separate runs in parallel.
While this yields an efficient estimate of PFI, it is difficult to analyze the estimator theoretically as each estimator highly depends on the realizations of the sampling strategy.
We thus again study the expectation over the sampling strategy and introduce the \emph{expected} iPFI as $\bar\phi^{(S_j)}_t := \mathbb{E}_\varphi[\hat\phi^{(S_j)}_t]$, similar to the expected PFI (model reliance) $\bar\phi^{(S_j)}$.
To evaluate the estimation quality, we will analyze the bias $\vert \bar\phi^{(S_j)}_t - \phi^{(S_j)}(h_t) \vert$ and the variance of $\bar\phi^{(S_j)}_t$.
Both can be combined by Chebyshev's inequality to obtain bounds on the approximation error of $\phi^{(S_j)}(h_t)$ for $\epsilon > \vert \bar\phi^{(S_j)}_t - \phi^{(S_j)}(h_t) \vert$ as
\begin{equation}\label{eq::approximation-errror}
\mathbb{P}(\vert \bar\phi^{(S_j)}_t - \phi^{(S_j)}(h_t) \vert > \epsilon)= \mathcal O (\mathbb{V}[\bar\phi^{(S_j)}_t]).
\end{equation}
As already said, all proofs are deferred to the supplementary material.
Our theoretical results are stated and proven in a  general manner, which allows one to extend our approach to other sampling strategies, other feature subsets, and even other aggregation techniques.

\paragraph{Static Model.}
Given iid observations from a data stream, we consider an incremental model that learns over time.
We begin under the simplified assumption that the model does not change over time, i.e., $h_t \equiv h$ for all $t$.

\begin{theorem}[Bias for static Model]\label{thm::constant-h-bias}
If $h \equiv h_t$, then 
\begin{equation*}
 \phi^{(S_j)}(h) - \bar\phi^{(S_j)}_t = (1-\alpha)^{t-t_0+1} \phi^{(S_j)}(h).
 \end{equation*}
\end{theorem}

From the above theorem it is clear that the bias of the expected iPFI $\bar\phi^{(S_j)}_t$ is exponentially decreasing towards zero for $t \to \infty$ and we thus continue to study the asymptotic estimator $\lim_{t\to  \infty}\bar\phi_t^{(S_j)}$.
While the bias does not depend on the sampling strategy, our next results analyzes the variance of the asymptotic estimator, which does depend on the sampling strategy.

\begin{theorem}[Variance for static Model]\label{thm::constant-h-variance}
If $h_t \equiv h$ and $\mathbb{V}[\Vert h(X_s^{(\bar{S_j})},X_r^{(S_j)})-Y_s\Vert -\Vert h(X_s)-Y_s\Vert] <\infty$, then
\begin{align*}
\text{Uniform: } &\mathbb{V} \left[ \lim_{t\to \infty}\bar\phi_t^{(S_j)} \right] = \mathcal O (-\alpha\log(\alpha)).
\\
\text{Geometric: } &\mathbb{V} \left[ \lim_{t\to \infty}\bar\phi_t^{(S_j)} \right] = \mathcal O (\alpha) + \mathcal O (p).
\end{align*}
\end{theorem}
The variance is therefore directly controlled by the choice of parameters $\alpha$ and $p$.
As the asymptotic estimator is unbiased, it is clear that these parameters control the approximation error, as shown in (\ref{eq::approximation-errror}).

\paragraph{Changing Model.}
So far, we discussed properties of $\bar\phi_t^{(S_j)}$ under the simplified assumption that $h_t$ does not change over time.
In an incremental learning scenario, $h_t$ is updated incrementally at each time step.
In cases where no concept drift affects the underlying data generating distribution, we can assume that an incremental learning algorithm gradually converges to an optimal model.
We thus assume that the change of the model is controlled and show  results similar to the case where $h_t$ is static.
To control model change formally, we introduce $f^{\Delta}_S(x^{(\bar{S_j})},h_s,h_t) := \mathbb{E}_{\tilde X \sim \mathbb{P}_S}[\Vert h_t(x^{(\bar{S_j})},\tilde X)-h_s(x^{(\bar{S_j})},\tilde X)\Vert]$.
The expectation of $f^\Delta_S$ is denoted $\Delta_S(h_s,h_t) := \mathbb{E}_X[f^{\Delta}_S(X,h_s,h_t)]$
and $\Delta(h_s,h_t) := \Delta_\emptyset(h_s,h_t)$.
We show that $\Delta_S$ and $\Delta$ bound the difference of FI of two models $h_t$ and $h_s$ and the bias of our estimator.
\begin{theorem}[Bias for changing Model]\label{thm::changing-h-bias}
If $\Delta(h_s,h_t) \leq \delta$ and $\Delta_S(h_s,h_t) \leq \delta_S$ for $t_0 \leq s \leq t$, then
\begin{equation*}
 \vert \bar\phi^{(S_j)}_t - \phi^{(S_j)}(h_t)\vert \leq \delta_S + \delta +\mathcal O((1-\alpha)^{t}).   
\end{equation*}
\end{theorem}

In the case of a changing model the estimator is therefore only unbiased if $h_t \to h$ as $t \to \infty$.
For results on the variance, we control the variability of the models at different points in time.
In the case of a static model, the covariances can be uniformly bounded, as they do not change over time.
Instead, for a changing model, we introduce the time-dependent function 
$$
f_s(Z_s,Z_r) := \Vert h_s(X_s^{(\bar{S_j})},X_r^{(S_j)})-Y_s\Vert -\Vert h_s(X_s)-Y_s\Vert
$$ 
and assume existence of some $\sigma_{\text{max}}^2$ such that
\begin{equation}\label{eq::bound-variability}
    \text{cov}(f_s(Z_s,Z_r),f_{s'}(Z_{s'},Z_{r'})) 
    \leq \sigma_{\text{max}}^2
\end{equation}
for $t_0\leq s,s' \leq t$, $r<s$ and $r'<s'$.

\begin{theorem}[Variance for changing Model]\label{thm::changing-h-variance}
Given (\ref{eq::bound-variability}) for a sequence of models $(h_t)_{t\geq0}$, the results of Theorem \ref{thm::constant-h-variance} apply.
\end{theorem}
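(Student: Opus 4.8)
\emph{Proof proposal.} The plan is to reduce the claim to the variance computation already carried out in Theorem \ref{thm::constant-h-variance}, by showing that this computation relied on the models being static only through a single uniform bound on the covariances. Throughout I would treat the learned sequence $(h_s)$ as fixed (deterministic), so that the only randomness is the iid data stream $(Z_s)$ and the sampling strategy $\varphi$; under this convention each $f_s(Z_s,Z_r)$ is a deterministic function of the two named observations. First I would unroll the exponential-smoothing recursion. Writing $g_s := \mathbb{E}_\varphi[\hat\lambda_s^{(S_j)}] = \sum_{r=0}^{s-1} p_{s,r}\, f_s(Z_s,Z_r)$, linearity of the expectation in $\varphi$ gives
\begin{equation*}
\bar\phi_t^{(S_j)} = (1-\alpha)^{t-t_0} g_{t_0} + \alpha\sum_{s=t_0+1}^{t}(1-\alpha)^{t-s} g_s =: \sum_{s=t_0}^{t} w_s\, g_s,
\end{equation*}
with nonnegative geometric weights $w_s$ that are exactly the ones appearing in the static proof.

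Next I would expand the variance as a double sum,
\begin{equation*}
\mathbb{V}\big[\bar\phi_t^{(S_j)}\big] = \sum_{s,s'} w_s w_{s'} \sum_{r<s}\sum_{r'<s'} p_{s,r}\,p_{s',r'}\,\mathrm{cov}\big(f_s(Z_s,Z_r),\,f_{s'}(Z_{s'},Z_{r'})\big),
\end{equation*}
and invoke the key independence structure: since the data are iid and each $f_s$ depends only on the observations indexed by $\{s,r\}$, the covariance vanishes whenever $\{s,r\}\cap\{s',r'\}=\emptyset$. This pattern is purely combinatorial and identical to the static case; crucially it is unaffected by the models changing over time, because determinism of $(h_s)$ keeps $f_s$ and $f_{s'}$ functions of disjoint, independent blocks of the stream.

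For the surviving overlapping-index terms I would apply the uniform bound (\ref{eq::bound-variability}) together with the nonnegativity of all weights:
\begin{equation*}
\mathbb{V}\big[\bar\phi_t^{(S_j)}\big] \le \sigma_{\text{max}}^2 \sum_{s,s'} w_s w_{s'} \sum_{\substack{r<s,\,r'<s'\\ \{s,r\}\cap\{s',r'\}\neq\emptyset}} p_{s,r}\,p_{s',r'}.
\end{equation*}
This is exactly the quantity bounded in the proof of Theorem \ref{thm::constant-h-variance}, where a uniform bound on the static covariances is now furnished by $\sigma_{\text{max}}^2$. Hence the same evaluation of the weighted index sums applies: the geometric decay of $w_s$ together with $p_{s,r}=1/s$ produces the $\mathcal{O}(-\alpha\log\alpha)$ rate for uniform sampling, while $p_{s,r}=p(1-p)^{s-r-1}$ produces the $\mathcal{O}(\alpha)+\mathcal{O}(p)$ rate for geometric sampling. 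Finiteness of $\sigma_{\text{max}}^2$ and the summability of the geometric weights guarantee that the $t\to\infty$ limit exists, so the bounds pass to $\lim_{t\to\infty}\bar\phi_t^{(S_j)}$.

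The main obstacle is conceptual rather than computational: one must verify carefully that the static-model variance proof used the covariances only through (i) the zero-covariance pattern forced by disjoint index sets and (ii) a single finite upper bound on the remaining covariances, and never through their exact values or signs. Once this is checked, replacing that bound by $\sigma_{\text{max}}^2$ transfers the static weighted-sum estimates verbatim, and no new summation needs to be carried out.
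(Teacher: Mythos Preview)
Your proposal is correct and follows essentially the same route as the paper: both unroll the recursion into a weighted sum, expand the variance as a double sum of covariances, observe that the iid assumption still forces the disjoint-index terms to vanish even when $f_s$ depends on $s$, and then replace the static bound $\sigma_2^2$ by the uniform bound $\sigma_{\text{max}}^2$ so that the combinatorial estimates from Theorem~\ref{thm::constant-h-variance} carry over verbatim. If anything, your formulation is slightly cleaner than the paper's displayed inequality, since you explicitly restrict the $\sigma_{\text{max}}^2$ bound to the overlapping-index terms rather than all terms (the paper's crude-looking bound is salvaged only by its final sentence invoking the iid structure).
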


\paragraph{Summary.}
We have shown that the approximation error of iPFI for FI is controlled by the parameters $\alpha$ and $p$.
In the case of drifting data, the approximation error is additionally affected by the changes in the model, as it is then possibly biased and the covariances may change.
As the expected PFI estimator has an approximation error of order $\mathcal O(1/N)$ for FI, we conclude that the above bounds on the approximation error of expected iPFI are also valid when compared with the expected PFI, if $\alpha$ is chosen according to $\alpha = \frac{2}{N+1}$.
In the next section, we corroborate our theoretical findings with empirical evaluations and showcase the efficacy of iPFI in scenarios with concept drift. We also elaborate on the differences between the two sampling strategies.

\newpage
\section{Experiments}
We conduct multiple experimental studies to validate our theoretical findings and present our approach on real data.
We consider three benchmark datasets, which are well-established in the FI literature \cite{Covert_Lundberg_Lee_2020,NIPS2017_7062}, one real-world data stream, and one synthetic data stream.
As our approach is inherently model-agnostic, we present experimental results for different model types.
As classification problems, we use \emph{adult} \cite{adult_1996} with a Gradient Boosting Tree (GBT) \cite{Friedman_2001} and \emph{bank} \cite{moro2011using} with a small 2-layer Neural Network (NN) with layer sizes $(128,64)$. 
As a regression problem, we use \emph{bike} \cite{fanaee2014event} with LightGBM (LGBM) \cite{ke2017lightgbm}.
The real-world electricity-price classification data stream mentioned in the introduction is called \emph{elec2} \cite{Harries99splice-2comparative}. In the static case an LGBM model performed best and in the online setting an Adaptive Random Forest classifier (ARF) \cite{gomes2017adaptive} was used.
The synthetic data stream is constructed with the \emph{agrawal} \cite{Agrawal.1993} classification data generator.
Like \emph{elec2}, an LGBM was used in the static scenario, and an ARF was applied in the dynamic setting.
The models' and data streams' implementation is based on \textit{scikit-learn} \cite{scikit-learn}, \textit{River} \cite{montiel_river_2020}, and \textit{OpenML} \cite{OpenML2020}.
We mainly rely on default parameters, and the supplement contains detailed information about the datasets and applied models.
\\
In all our experiments, we compute the \textbf{iPFI} estimator $\hat{\phi}_{\text{iPFI}}^{(S_j)}$ as the average over ten realizations $\hat\phi^{(S_j)}_t$ of the incremental sampling strategies (uniform or geometric).
All baseline approaches are chosen, such that they require the same amount of model evaluations as iPFI.

\begin{figure*}[t]
\begin{minipage}{\textwidth}
    \begin{minipage}[b]{0.59\textwidth}
        \centering
        \includegraphics[width=\textwidth]{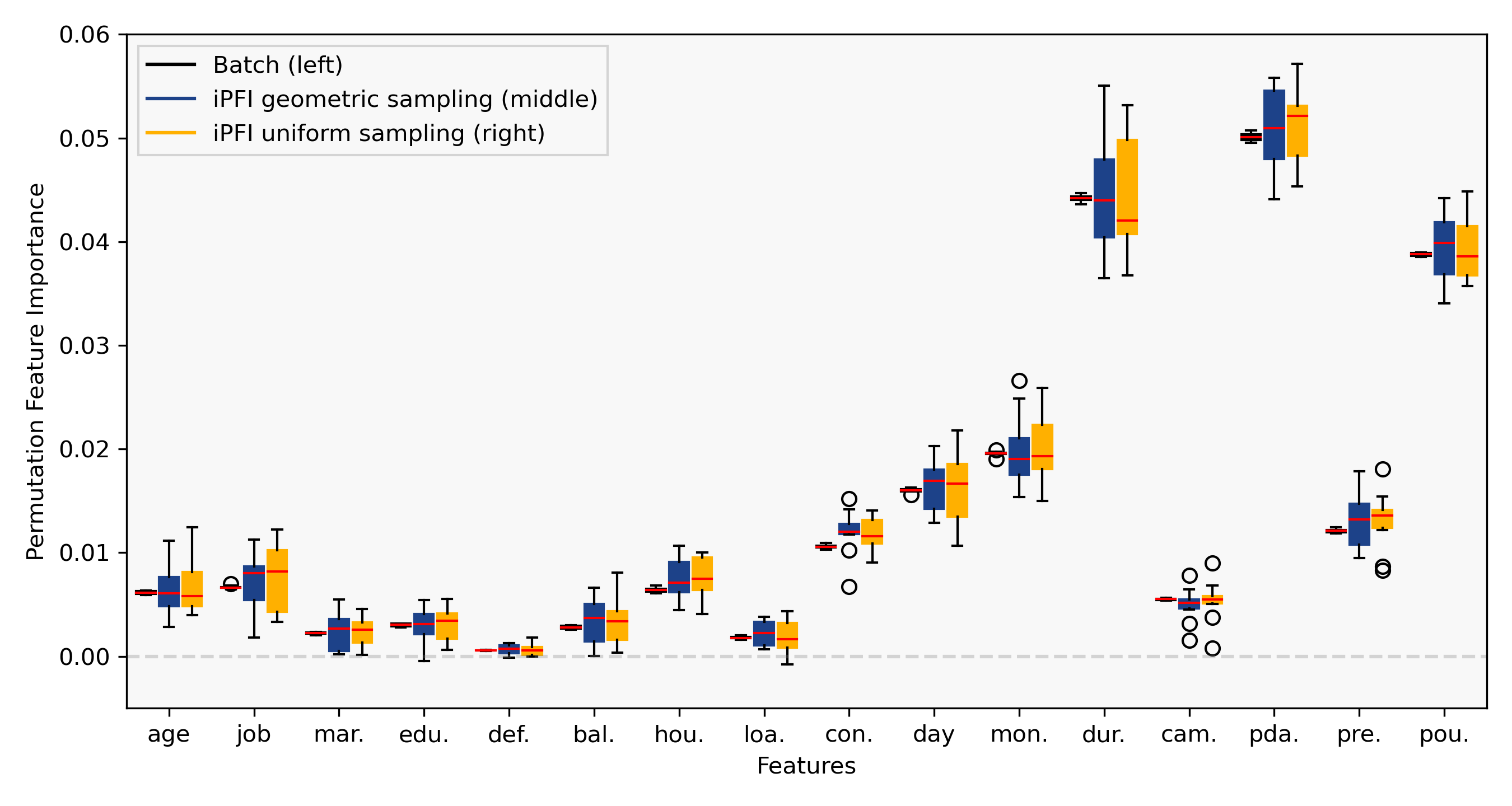}
        \captionof{figure}{Boxplot of PFI estimates per feature of the \emph{bank} dataset for batch PFI (left), geometric sampling iPFI (middle), and uniform sampling iPFI (right) on a pre-trained static NN.}
        \label{fig:static_model}
    \end{minipage}
    \hfill
    \begin{minipage}[b]{0.39\textwidth}
        \centering
        \begin{tabular}{@{}cccc@{}}
        \toprule
        \multirow{2}{*}{\begin{tabular}[c]{@{}c@{}}\textbf{data}\\ (N)\end{tabular}} & \multirow{2}{*}{\begin{tabular}[c]{@{}c@{}}\textbf{model}\\ (perf.)\end{tabular}} & \multicolumn{2}{c}{\textbf{error}} \\
         &  & uniform & geometric \\ \midrule
        \begin{tabular}[c]{@{}c@{}}agrawal\\ (20k)\end{tabular} & \begin{tabular}[c]{@{}c@{}}LGBM\\ (99\%)\end{tabular} & \begin{tabular}[c]{@{}c@{}}0.011\\ (.006)\end{tabular} & \begin{tabular}[c]{@{}c@{}}0.010\\ (.006)\end{tabular} \\[9pt]
        \begin{tabular}[c]{@{}c@{}}elec2\\ ($\approx$45k)\end{tabular} & \begin{tabular}[c]{@{}c@{}}LGBM\\ (88\%)\end{tabular} & \begin{tabular}[c]{@{}c@{}}0.038\\ (.012)\end{tabular} & \begin{tabular}[c]{@{}c@{}}0.037\\ (.011)\end{tabular} \\[9pt]
        \begin{tabular}[c]{@{}c@{}}adult\\ ($\approx$45k)\end{tabular} & \begin{tabular}[c]{@{}c@{}}GBT\\ (86\%)\end{tabular} & \begin{tabular}[c]{@{}c@{}}0.126\\ (.040)\end{tabular} & \begin{tabular}[c]{@{}c@{}}0.114\\ (.025)\end{tabular} \\[9pt]
        \begin{tabular}[c]{@{}c@{}}bank\\ ($\approx$45k)\end{tabular} & \begin{tabular}[c]{@{}c@{}}NN\\ (91\%)\end{tabular} & \begin{tabular}[c]{@{}c@{}}0.126\\ (.024)\end{tabular} & \begin{tabular}[c]{@{}c@{}}0.132\\ (.013)\end{tabular} \\[9pt]
        \begin{tabular}[c]{@{}c@{}}bike\\ ($\approx$17k)\end{tabular} & \begin{tabular}[c]{@{}c@{}}LGBM\\ (26.6)\end{tabular} & \begin{tabular}[c]{@{}c@{}}0.022\\ (.005)\end{tabular} & \begin{tabular}[c]{@{}c@{}}0.019\\ (.008)\end{tabular} \\ \bottomrule
        \end{tabular}
        \captionof{table}{Median error of iPFI compared to batch PFI (IQR between $Q_{1}$ and $Q_{3}$ in braces). Model performance is measured in accuracy and mean absolute error (\emph{bike}).}
        \label{tab:exp_a_mae}
    \end{minipage}
\end{minipage}
\end{figure*}

\begin{figure*}[t]
    \centering
    \includegraphics[width=\textwidth]{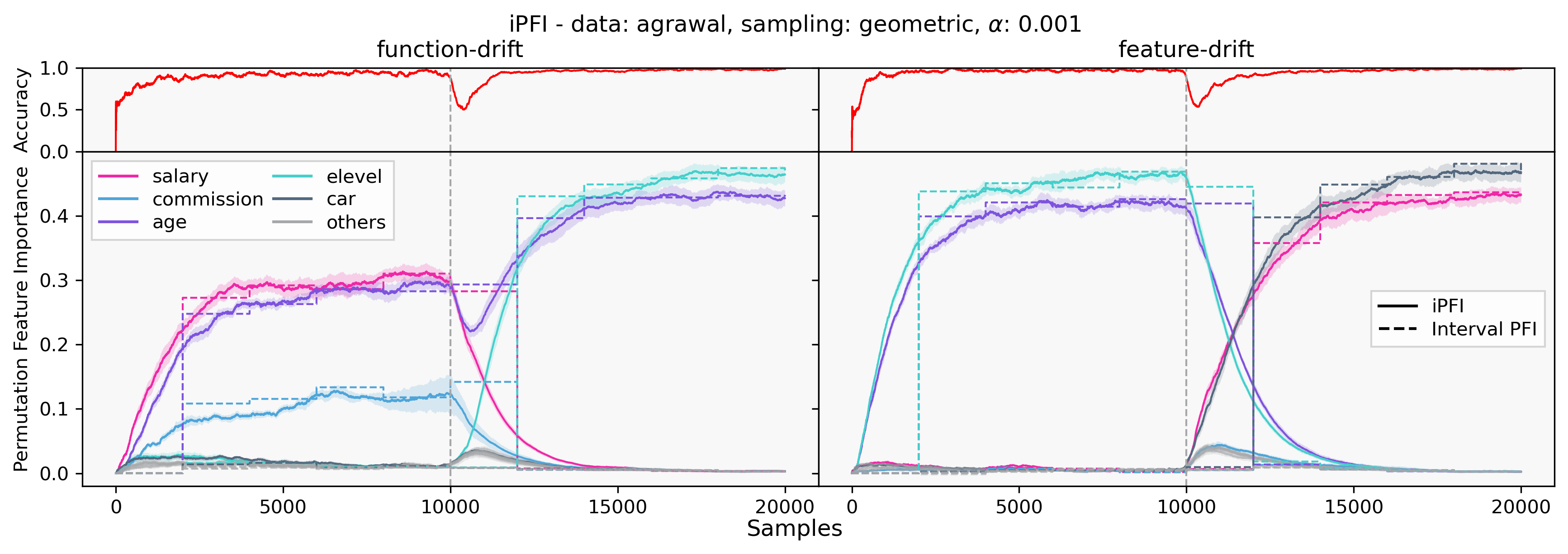}
    \caption{iPFI on two \textit{agrawal} data streams with induced concept drift. The most important features are colored. The dashed line denotes the batch calculation at set intervals. The dashed vertical line denotes the concept drift.}
    \label{fig:exp_b}
\end{figure*}

\begin{figure*}[t]
    \centering
    \includegraphics[width=\textwidth]{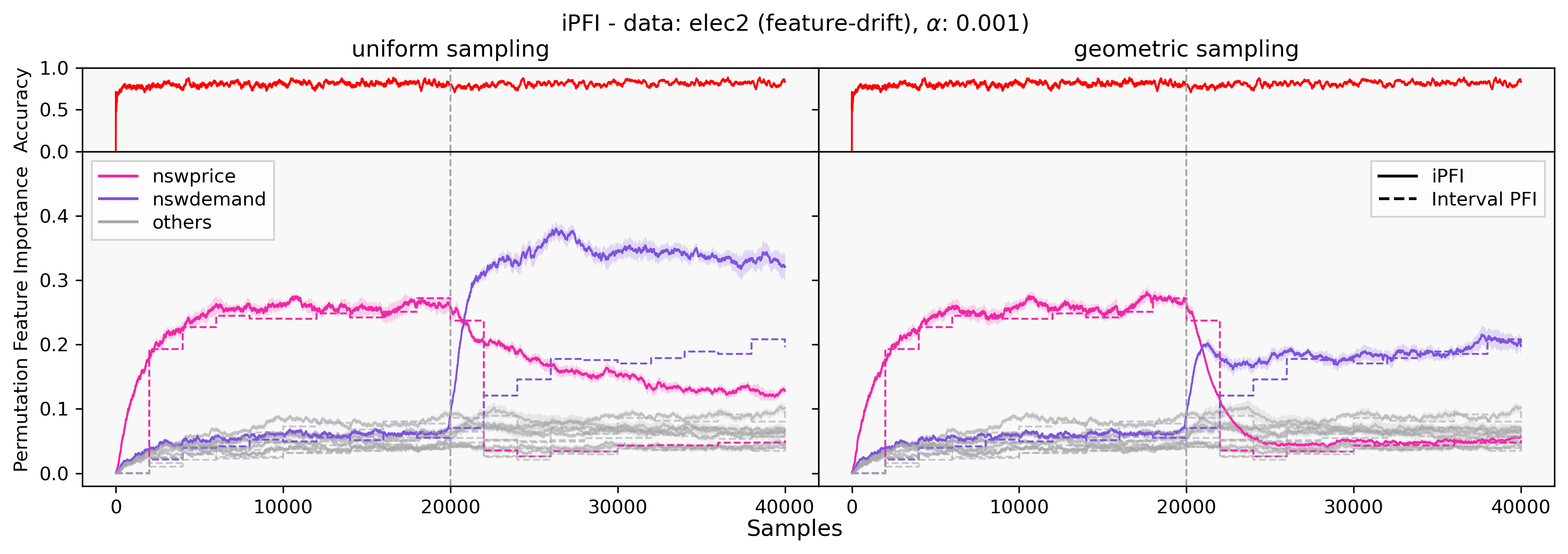}
    \caption{iPFI with uniform (left) and geometric sampling (right) on \textit{elec2} with a feature-drift.}
    \label{fig:uniform_vs_geometric}
\end{figure*}

\subsection{Experiment A: Approximation of Batch PFI}
First, we consider the static model setting where models are pre-trained before they are explained on the whole dataset (no incremental learning).
This experiment demonstrates that iPFI correctly approximates batch PFI estimation.
\\
We compare iPFI with the classical \textbf{batch PFI} $\hat\phi^{(S_j)}_{\text{batch}}$ for feature $j \in D$, which is computed using the whole static dataset over ten random permutations.
We normalize $\hat{\phi}_{\text{iPFI}}^{(S_j)}$ and $\hat{\phi}_{\text{batch}}^{(S_j)}$ between 0 and 1, and compute the sum over the feature-wise absolute approximation errors;
\begin{equation*}
    \text{error} := \sum_{j \in D}{\left|\hat{\phi}_{\text{iPFI}}^{(S_j)}-\hat{\phi}_{\text{batch}}^{(S_j)}\right|}.
\end{equation*} 
Table~\ref{tab:exp_a_mae} shows the median and interquartile range (IQR) (difference between the first and third quartile) of the error based on ten random orderings of each dataset.
Figure~\ref{fig:static_model} shows the approximation quality of iPFI with geometric and uniform sampling per feature for the \emph{bank} dataset. 
In the static modeling case, there is no clear difference between geometric and uniform sampling.
However, in the dynamic modeling context under drift, the sampling strategy has a substantial effect on the iPFI estimates. 

\subsection{Experiment B: Online PFI Calculation under Drift}
In this experiment, we consider a dynamic modeling scenario. 
Here, instead of a pre-trained model, we fit ARF models incrementally on real data streams and compute iPFI on the fly.
For the sake of clarity and simplicity, we only present results for ARF models here. 
However, as our approach is inherently model-agnostic, any incremental model (implemented for example in \emph{river}) can be explained.
As a baseline, we compare our approach to the \textbf{interval PFI} for feature $j \in D$, which computes the PFI over fixed time intervals during the online learning process with ten random permutations in each interval.
This can be seen as a naive implementation of iPFI with large gaps of uncertainty and a substantial time delay.
With the synthetic \emph{agrawal} stream we induce two kinds of \emph{real} concept drifts:
First, we switch the classification function of the data generator, which we refer to as function-drift (changing the functional dependency but retaining the distribution of $X$).
Second, we switch the values of two or more features with each other, which we refer to as feature-drift (changing the functional dependency by changing the distribution of $X$).
Note that feature-drift can also be applied to datasets, where the classification function is unknown.
Figure~\ref{fig:exp_b} showcases how well iPFI reacts to both concept drift scenarios.
Both concept drifts are induced in the middle of the data stream (after 10,000 samples).
For the function-drift example (Figure~\ref{fig:exp_b}, left), the \emph{agrawal} classification function was switched from \citet{Agrawal.1993}'s concept $1$ to concept $2$.
Theoretically, only two features should be important for both concepts: 
For the first concept the pink \emph{salary} and the purple \emph{age} features are needed, and for the second concept the classification function relies on the cyan \emph{education} and the purple \emph{age} features.
However, the ARF model also relies on the blue \emph{commission} feature, which can be explained as \emph{commission} directly depends on \emph{salary}.
In the feature-drift scenario (Figure~\ref{fig:exp_b}, right), the ARF model adapts to a sudden drift where both important features (\emph{education} and \emph{age}) are switched with two unimportant features (\emph{car} and \emph{salary}). 
In both scenarios iPFI instantly detects the shifts in importance.  

From both simulations, it is clear that iPFI and its anytime computation has clear advantages over interval PFI.
In fact, iPFI quickly reacts to changes in the data distribution while still closely matching the ``ground-truth'' results of the batch-interval computation.
For further concept drift scenarios, we refer to the supplementary material.

\subsection{Experiment C: Geometric vs. Uniform Sampling}
Lastly, we focus on the question, which sampling strategy to prefer in which learning environments.
We conclude that geometric sampling should be applied under feature-drift scenarios, as the choice of sampling strategy substantially impacts iPFI's performance in concept drift scenarios where feature distributions change.
If a dynamic model adapts to changing feature distributions, and the PFI is estimated with samples from the outdated distribution, the resulting replacement samples are outside the current data manifold. 
Estimating PFI by using this data can result in skewed estimates, as illustrated in Figure~\ref{fig:uniform_vs_geometric}.
There, we induce a feature-drift by switching the values of the most important feature for an ARF model on \emph{elec2} with a random feature. 
The uniform sampling strategy (Figure~\ref{fig:uniform_vs_geometric}, left) is incapable of matching the ``ground-truth'' interval PFI estimation like the geometric sampling strategy (Figure~\ref{fig:uniform_vs_geometric}, right).
Hence, in dynamic learning environments like data stream analytics or continual learning, we recommend applying a sampling strategy that focuses on more recent samples, such as geometric distributions.
For applications without drift in the feature-space like progressive data science, uniform sampling strategies, which evenly distribute the probability of a data point being sampled across the data stream, may still be preferred.

For further experiments on different parameters, we again refer to the supplement.
Therein, we show that the smoothing parameter $\alpha$ substantially effects iPFI's FI estimates.
Like any smoothing mechanism, this parameter controls the deviation of iPFI's estimates.
This parameter should be set individually for the task at hand.
In our experiment, values between $\alpha = 0.001$ (conservative) and $\alpha = 0.01$ (reactive) appeared to be reasonable.

\section{Conclusion and Future Work}
In this work, we considered global FI as a statistic measure of change in the model's risk when features are marginalized.
We discussed PFI as an approach to estimate feature importance and proved that only appropriately scaled permutation tests are unbiased estimators.
In this case, the expectation over the sampling strategy (\emph{expected PFI}) then corresponds to the model reliance U-Statistic \cite{JMLR:v20:18-760}.

Based on this notion, we presented iPFI, an efficient model-agnostic algorithm to incrementally estimate FI by averaging over repeated realizations of a sampling strategy.
We introduced two incremental sampling strategies and established theoretical results for the expectation over the sampling strategy (\emph{expected iPFI}) to control the approximation error using iPFI's parameters.
On various benchmark datasets, we demonstrated the efficacy of our algorithms by comparing them with the batch PFI baseline method in a static progressive setting as well as with interval-based PFI in a dynamic incremental learning scenario with different types of concept drift and parameter choices.

Applying XAI methods incrementally to data stream analytics offers unique insights into models that change over time.
In this work, we rely on PFI as an established and inexpensive FI measure.
Other computationally more expensive approaches (such as SHAP) address some limitations of PFI.
As our theoretical results can be applied to arbitrary feature subsets, analyzing these methods in the dynamic environment offers interesting research opportunities. 
In contrast to this work's technical focus, analyzing the dynamic XAI scenario through a human-focused lens with human-grounded experiments is paramount \cite{DoshiVelez.2017}.

\newpage

{
\small
\bibliography{main.bib} 
}

\section{Acknowledgments} We gratefully acknowledge funding by the Deutsche Forschungsgemeinschaft (DFG, German Research Foundation): TRR 318/1 2021 – 438445824.

\newpage
~\newpage

\appendix
\newtheorem{proposition}{Proposition}
\newtheorem{lemma}{Lemma}

\begin{center}
    \textbf{Technical Supplement \\ Incremental Permutation Feature Importance (iPFI): \\ Towards Online Explanations on Data Streams}
\end{center}

This is the technical supplement for the contribution \emph{Incremental Permutation Feature Importance (iPFI): Towards Online Explanations on Data Streams}. The supplement contains proofs to all of our theoretical claims, a description about the datasets and models used, summary information to the contribution's experiments, and a variety of further experiments.

\section{Proofs}
In the following, we provide the proofs of all theorems. We further present more general results that are stated as propositions.

\begin{theorem}\label{thm::batch-pfi}
The expected PFI (model reliance) can be rewritten as a normalized expectation over uniformly random permutations, i.e.
\begin{align}\label{eq::pfi-batch}
\bar\phi^{(S_j)} = \frac N {N-1}\mathbb{E}_{\varphi \sim \text{unif}(\mathfrak S_N)} \left[\hat\phi^{(S_j)}_{\varphi} \right].
\end{align}
\end{theorem}

\begin{proof}
We write $f(z_n,z_m) := \Vert h(x_n^{(\bar{S_j})},x_m^{(S_j)})-y_n\Vert -\Vert h(x_n)-y_n\Vert$ and compute the expectation over randomly sampled permutations $\varphi \in \mathfrak{S}_N$. 
Each permutation has probability $\frac{1}{N!}$, which yields
\begin{align*}
    \mathbb{E}_{\varphi}[\hat\phi^{(S_j)}_\varphi] &= \frac 1 {N!} \sum_{\varphi \in \mathfrak{S}_N}\hat\phi^{(S_j)}_\varphi
    \\
    &= \frac 1 N \frac 1 {N!} \sum_{n=1}^N \sum_{\varphi \in \mathfrak{S}_N}  f(z_n,z_{\varphi(n)}) 
    \\
    &= \frac 1 N \frac 1 {N!} \sum_{n=1}^N \sum_{m=1}^N (N-1)!  f(z_n,z_m) 
    \\
    &= \frac 1 N \frac 1 {N} \sum_{n=1}^N \sum_{m \neq n}  f(z_n,z_m)
    \\
    &= \frac 1 N \frac 1 {N} \sum_{n=1}^N \sum_{m \neq n} \Vert h(x_n,x_{m})-y_n\Vert 
    \\
    &- \frac {N-1} {N^2} \sum_{n=1}^N \Vert h(x_n)-y_n\Vert,
\end{align*}
where we used in the third line that there are $(N-1)!$ permutations with $\varphi(n)=m$.
We thus conclude,
\begin{align*}
\frac N {N-1} \mathbb{E}_{\varphi}[\hat\phi^{(S_j)}_\varphi] &= \hat e_{\text{switch}} -\hat e_{\text{orig}} = \bar\phi^{(S_j)}.
\end{align*}

\end{proof}

\begin{theorem}[Bias for static Model]\label{thm::constant-h-bias}
If $h \equiv h_t$, then 
\begin{equation*}
 \phi^{(S_j)}(h) - \bar\phi^{(S_j)}_t = (1-\alpha)^{t-t_0+1} \phi^{(S_j)}(h).
 \end{equation*}
\end{theorem}

\begin{proof}
We consider the more general estimator $\tilde \phi^{(S)}_t := \mathbb{E}_{\varphi}[\sum_{s=t_0}^t w_s \hat\lambda^{(S)}_t(x_t,x_{\varphi_t},y_t)]$ and prove a more general result that can be used for arbitrary sampling and aggregation techniques.
\begin{proposition}
If $h \equiv h_t$, then 
\begin{equation*}
 \phi^{(S)}(h) - \mathbb{E}[\tilde\phi^{(S)}_t] = (1-\mu_w) \phi^{(S)}(h)    
\end{equation*}
with $\mu_w := \sum_{s=t_0}^t w_s$.
\end{proposition}

\begin{proof}
As each $\hat\lambda_s^{(S)}$ is an unbiased estimator of $\phi^{(S)}(h_s)$, we have $\mathbb{E}[\tilde\phi^{(S)}_t]=\sum_{s=t_0}^{t} w_s \phi^{(S)}(h)= \mu_w \phi^{(S)}(h)$, where we used $(\varphi)_{t_0 \leq s \leq t} \perp (X,Y)$.
\end{proof}
The result then follows directly, as $\bar\phi^{(S)} = \tilde\phi^{(S)}$ for $w_s := \alpha(1-\alpha)^{t-s}$, $\mu_w=1-(1-\alpha)^{t-t_0+1}$ and $S := S_j$.

\end{proof}

\begin{theorem}[Variance for static Model]\label{thm::constant-h-variance}
If $h_t \equiv h$ and $\mathbb{V}[\Vert h(X_s^{(\bar{S_j})},X_r^{(S_j)})-Y_s\Vert -\Vert h(X_s)-Y_s\Vert] <\infty$, then
\begin{align*}
\text{Uniform: } &\mathbb{V} \left[ \lim_{t\to \infty}\bar\phi_t^{(S_j)} \right] = \mathcal O (-\alpha\log(\alpha)).
\\
\text{Geometric: } &\mathbb{V} \left[ \lim_{t\to \infty}\bar\phi_t^{(S_j)} \right] = \mathcal O (\alpha) + \mathcal O (p).
\end{align*}
\end{theorem}

\begin{proof}
We again consider the more general estimator $\tilde \phi^{(S)}_t  := \mathbb{E}_\varphi[\sum_{s=t_0}^t w_s \hat\lambda^{(S)}_t(x_t,x_{\varphi_t},y_t)]$ and prove a result, that can be used for arbitrary sampling and aggregation techniques.

\begin{proposition}\label{thm::constant-h-variance-general}
For $\varphi$ from (5) with $\varphi_s \perp \varphi_r$ for $r<s$ and $p_{s,r}\leq p_{s',r}$ for $s>s'$, i.e., the probability to sample a previous observation $r$ is non-increasing over time, it holds
\begin{equation*}
    \mathbb{V}\left[\tilde\phi_t^{(S)}\right] \leq 4\sigma_w^2\sigma_2^2 + 2\sigma_{2}^2\sum_{s=t_0}^t\sum_{s'=t_0}^{s-1} w_s w_{s'}\underbrace{\sum_{r=0}^{s'-1}p_{s',r}^2}_{=: \mathcal I_\varphi(s)},
\end{equation*}
provided that $\sigma_2^2 := \mathbb{V}[f(Z_s,Z_r)] <\infty$ and with $\sigma^{2}_w := \sum_{s=0}^t w_s^2$.
\end{proposition}

\begin{proof}
We denote $f(Z_s,Z_r) := \Vert h(X_s^{(\bar{S_j})},X_r^{(S)})-Y_s\Vert -\Vert h(X_s)-Y_s\Vert$.
Using $p_{s,r} := \mathbb{P}(\varphi_s=r)$ and properties of variance, we can write
\begin{align*}
\mathbb{V}[\tilde\phi_t^{}] = \mathbb{V}[\sum_{s=t-N+1}^t w_s \sum_{r=0}^{s-1} p_{r,s} f(Z_s,Z_r)]
\\
= \sum_{s,s'=t_0}^t w_s w_{s'} \sum_{r=0}^{s-1}\sum_{r'=0}^{s'-1} p_{s,r}p_{s',r'}\text{cov}((s,r),(s',r')),
\end{align*}
where $\text{cov}((s,r),(s',r')):= \text{cov}(f(Z_s,Z_r),f(Z_{s'},Z_{r'}))$ denotes the covariance of the two random variables.
The above sum ranges over all possible combinations of pairs $(s,r)$, where $s=t_0\dots,t$ and $r=0,\dots,s-1$.
As $r<s$ and $r'<s'$, it holds $\vert\{s,s',r,r'\}\vert \geq 2$.
When $\vert\{s,s',r,r'\}\vert=2$ then $s=s'$ and $r=r'$ and the covariance reduces to the variance.
When none of the indices match, i.e., $\vert\{s,s',r,r'\}\vert = 4$, then the covariance is zero, due to the independence assumption.
When exactly one index matches, then there are three possible cases:
\begin{itemize}
\item case 1: $s=s',r\neq r'$, 
\item case 2: $s\neq s',r\neq r$ with $r'=s$ or $s'=r$ 
\item case 3:  $s \neq s', r=r'$.
\end{itemize}

Case 2 yields the same covariances due to the iid assumption and the symmetric of the covariance.
For case 1, with $\mathbb{E}_{(Z_s,Z_r)}[f(Z_s,Z_r)] = \mathbb{E}_{Z_s} \mathbb{E}_{Z_r}[f(Z_s,Z_r)] = \phi^{(S)}(h)$, we denote $\tilde f(Z_s,Z_r) := f(Z_s,Z_r) - \phi^{(S)}(h)$ to compute the covariance as
\begin{align*}
   \text{cov}((s,r),(s',r')) &= \mathbb{E}[\tilde f(Z_{s},Z_{r})\tilde f(Z_{s},Z_{r'})]
   \\
   &=\mathbb{E}_{Z_{s}}[\mathbb{E}_{Z_{r}}[\tilde f(Z_{s},Z_{r})]\mathbb{E}_{Z_{r'}}[\tilde f(Z_{s},Z_{r'})]]
   \\
    &=\mathbb{E}_{Z_{s}}[\mathbb{E}_{Z_{r}}[\tilde f(Z_{s},Z_{r})]^2] 
    \\
    &= \mathbb{V}_{Z_{s}}[\mathbb{E}_{Z_{r}}[f(Z_{s},Z_{r})]],
\end{align*}
where we have used $\mathbb{E}_{Z_{s}}[\mathbb{E}_{Z_{r}}[\tilde f(Z_{s},Z_{r})]]=\phi^{(S)}(h)$ as well as the iid assumption multiple times, in particular when $\mathbb{E}_{Z_{r}}[f(Z_{s},Z_{r})]=\mathbb{E}_{Z_{r'}}[f(Z_{s},Z_{r'})]$.
The same arguments apply for the second argument for case 3, as
\begin{align*}
   \text{cov}((s,r),(s',r')) = \mathbb{V}_{Z_{r}}[\mathbb{E}_{Z_{s}}[f(Z_{s},Z_{r})]].
 \end{align*}
We thus summarize
\begin{equation*}
   \text{cov}((s,r),(s',r')) = 
   \begin{cases}
   \mathbb{V}[f(Z_s,Z_r)], \text{ if } s=s', r=r'
    \\
    \mathbb{V}_{Z_s}[\mathbb{E}_{Z_r}[f(Z_s,Z_r)]], \text{ if case 1}
    \\
   \text{cov}((s,r),(s',r')), \text{ if case 2}
    \\
    \mathbb{V}_{Z_r}[\mathbb{E}_{Z_s}[f(Z_s,Z_r)]], \text{ if case 3}
    \\
    0, \text{ if } \vert\{s,s',r,r'\}\vert = 4.
\end{cases}
\end{equation*}
By the Cauchy-Schwarz inequality all covariances are bounded by $\sigma_2^2 :=  \mathbb{V}[f(Z_s,Z_r)]$.
With $I := \{t_0,\dots,t\}$ and $I_{s} := \{0,\dots,s-1\}$ and $Q_2 := \{(s,r):s=s' \in I, r=r' \in I_s\}$ $Q_3:= \{(s,s',r,r') : s,s' \in I, r \in I_s, r' \in I_{r'}, \vert\{s,s',r,r'\}\vert=3\}$.
We thus obtain
\begin{align*}
\mathbb{V}[\tilde\phi_t^{(S)}] &= \sigma_2^2\sum_{(s,r) \in Q_2}w_s^2p_{s,r}^2
\\
&+ \sum_{(s,s',r,r') \in Q_3}w_s w_{s'}p_{s,r}p_{s',r'}\text{cov}((s,r),(s',r')).
\end{align*}
For the first sum, we have
\begin{equation*}
    \sum_{(s,r) \in Q_2}w_s^2 p_{s,r}^2 \leq \sum_{(s,r) \in Q_2}w_s^2 p_{s,r} = \sum_{s=t_0}^t w_s^2 = \sigma_w^2.
\end{equation*}
For the second sum, $Q_3$ decomposes into the three cases. 
For case 1,
\begin{align*}
    \sum_{\substack{(s,s',r,r') \in Q_3 \\ s=s', r\neq r'}} w_s w_{s'} p_{s,r}p_{s,r'} &= \sum_{s=t_0}^t w_s w_{s'}  \sum_{\substack{(r,r')\in I_s^2 \\ r\neq r'}} p_{s,r}p_{s,r'} 
    \\
    &\leq \sum_{s=t_0}^t w_s^2(\sum_{r=0}^{s-1}p_{s,r})^2 = \sigma_w^2.
\end{align*}
For case 2 w.l.o.g assume $r=s'$, which implies $s>s'$ and thus $w_s\geq w_{s'}$, then
\begin{align*}
   \sum_{\substack{(s,s',r,r') \in Q_3 \\ s \neq s', r\neq r', s'=r}}w_s w_{s'} p_{s,s'}p_{s',r'} &= \sum_{s =t_0}^t w_s \sum_{s'=t_0}^{s-1} w_{s'}p_{s,s'}
   \\
   &\leq \sum_{s =t_0}^t w_s^2 = \sigma_w^2.
\end{align*}
For case 3, we have
\begin{align*}
\sum_{\substack{(s,s',r,r') \in Q_3 \\ s \neq s', r=r'}}w_s w_{s'}p_{s,r}p_{s',r}  &=\sum_{\substack{(s,s')\in I^2 \\ s\neq s'}}w_s w_{s'}\sum_{r=0}^{\min(s,s')-1}p_{s,r}p_{s',r} 
\\
&= 2\sum_{\substack{(s,s')\in I^2 \\ s> s'}}w_s w_{s'}\sum_{r=0}^{s'-1}p_{s,r}p_{s',r} 
\\
&\leq 2\sum_{\substack{(s,s')\in I^2 \\ s> s'}}w_s w_{s'}\sum_{r=0}^{s'-1}p_{s',r}^2.
\end{align*}
In summary, we conclude
\begin{align*}
    \mathbb{V}\left[\tilde\phi_t^{(S)}\right] \leq 4\sigma_w^2\sigma_2^2 + 2\sigma_{2}^2\sum_{s=t_0}^t\sum_{s'=t_0}^{s-1} w_s w_{s'}\sum_{r=0}^{s'-1}p_{s',r}^2.
\end{align*}
\end{proof}

The last sum depends on both the choices of weights $w_s$ and the \emph{collision probability} $\mathcal I_\varphi(s) = \sum_{r=0}^{s-1} p_{s,r}^2 = P(Q_1=Q_2)$ for $Q_1,Q_2 \overset{iid}{\sim} \mathbb{P}_{\varphi_s}$, which is related to the Rényi entropy \cite{renyi1961measures}.
The variance increases with the collision probabilities of the sampling strategy, in particular $\mathcal I_{\text{unif}}(s) = \frac 1 s$ and $\mathcal I_{\text{geom}}(s) = \frac{p}{2-p} (1 + (1-p)^{2(s-t_0)+1})$ for uniform and geometric sampling, respectively.

\begin{lemma}
For geometric sampling and $p \in (0,1)$ it holds
\begin{equation*}
    \mathcal I_{\text{geom}}(s) = \sum_{r=0}^{s-1}p^2_{s,r} 
    =\frac p {2-p}(1 + (1-p)^{2(s-t_0)+1}).
\end{equation*}
\end{lemma}

\begin{proof}
The probabilities for geometric sampling are
\begin{equation*}
    p_{s,r} = \begin{cases}
      p \cdot (1-p)^{s-r-1}, r>t_0= \frac 1 p
      \\
      p \cdot (1-p)^{s-t_0}, r\leq t_0= \frac 1 p.
    \end{cases}
\end{equation*}
Then
\begin{align*}
    \mathcal I_{\text{geom}}(s) &= \sum_{r=0}^{s-1}p^2_{s,r} 
    \\
    &= \sum_{r=0}^{t_0-1} p^2 \cdot (1-p_r)^{2(s-t_0)} + \sum_{r=t_0}^{s-1} p^2 (1-p)^{2(s-r-1)}
    \\
    &=t_0 \cdot p^2 \cdot (1-p)^{2(s-t_0)} + \sum_{r=t_0}^{s-1} p^2 (1-p)^{2(s-r-1)}
    \\
    &= p\cdot (1-p)^{2(s-t_0)} + p^2\sum_{r=0}^{s-t_0-1} (1-p)^{2r}
    \\
    &= p \cdot (1-p)^{2(s-t_0)} + p^2 \frac{1-(1-p)^{2(s-t_0)}}{1-(1-p)^2}
    \\
    &=  p \cdot (1-p)^{2(s-t_0)} + \frac p {2-p} (1-(1-p)^{2(s-t_0)})
    \\
    &= \frac p {2-p}(1 + (1-p)^{2(s-t_0)+1}).
\end{align*}
\end{proof}

We now apply Proposition \ref{thm::constant-h-variance-general} to our particular estimator $\bar\phi^{(S)} = \tilde\phi^{(S)}$ with $w_s := \alpha(1-\alpha)^{t-s}$ and take the limit for $t\to\infty$. 
Note that both uniform and geometric sampling fulfill the condition of the theorem.
Furthermore, we have $\sigma^2_w = \alpha^2 \sum_{s=0}^{t-t_0}(1-\alpha)^s \nearrow \frac \alpha {2-\alpha}$.

\paragraph{Uniform Sampling}
For uniform sampling, we have
\begin{align*}
\mathbb{V}[\bar\phi_t^{(S)}] 
&\leq\frac \alpha {2-\alpha}4\sigma_2^2 + 2\sigma_{2}^2\sum_{s=t_0}^t\sum_{s'=t_0}^{s-1} \alpha^2 \frac{(1-\alpha)^{t-s+t-s'}}{s'}
\\
&\leq \frac \alpha {2-\alpha}4\sigma_2^2 + 2\sigma_{2}^2\alpha^2\sum_{s=0}^{t-t_0} (1-\alpha)^s\sum_{s'=0}^{t-t_0} \frac{(1-\alpha)^{s'}}{t-s'}
\end{align*}
For the first sum, we have $\alpha \sum_{s=0}^{t-t_0} (1-\alpha)^s \nearrow  1$ for $t\to \infty$. For the second sum
\begin{align*}
 \alpha\sum_{s'=0}^{t-t_0} \frac{(1-\alpha)^{s'}}{t-s'} &\leq \alpha(\sum_{\substack{s'=0 \\ s' \geq t/2}}^{t-t_0} (1-\alpha)^{s'} + 1+  \sum_{\substack{s'=1 \\ s' < t/2}}^{t-t_0} \frac{(1-\alpha)^{s'}}{s'})
 \\
 &\leq (1-\alpha)^{t/2}-(1-\alpha)^{t-t_0+1} + \alpha -\alpha\log(\alpha)
 \\
 &\overset{t \to \infty}{\longrightarrow} \alpha - \alpha \log(\alpha).
\end{align*}
Hence,
\begin{equation*}
    \mathbb{V}[\lim_{t\to\infty}\bar\phi_t^{(S)}] = \mathcal O (-\alpha\log(\alpha)).
\end{equation*}

\paragraph{Geometric Sampling}
For geometric sampling, we have
\begin{align*}
    \mathbb{V}[\bar\phi_t^{(S)}] &\leq \underbrace{\frac \alpha {2-\alpha}4\sigma_2^2}_{= \mathcal O(\alpha)}
    \\
    &+ 2\sigma_{2}^2 \underbrace{\alpha^2 \sum_{s=t_0}^t \sum_{s'=t_0}^{s-1} (1-\alpha)^{t-s+t-s'}]}_{=: q(\alpha)}\mathcal I_{\text{geom}}(s).
\end{align*}
For the second term it is enough to show that $0<\lim_{t\to\infty} q(\alpha)<\infty$ to prove the result, as $\mathcal I_{\text{geom}}(s) = \mathcal O(p)$.
By using the properties of geometric progression, we obtain
\begin{align*}
    q(\alpha)
    &= \alpha \sum_{s=t_0}^t (1-\alpha)^{t-s} \alpha\sum_{s'=t-s}^{t-t_0}(1-\alpha)^{s'} 
    \\
    &=\alpha \sum_{s=t_0}^t (1-\alpha)^{t-s} ((1-\alpha)^{t-s}-(1-\alpha)^{t-t_0+1})
    \\
    &=\alpha \sum_{s=0}^{t-t_0} (1-\alpha)^{s} ((1-\alpha)^{s}-(1-\alpha)^{t-t_0+1})
    \\
    &= \underbrace{\alpha \sum_{s=0}^{t-t_0} (1-\alpha)^{2s}}_{\nearrow \frac 1 {2-\alpha}} - (1-\alpha)^{t-t_0+1}\underbrace{\alpha \sum_{s=0}^{t-t_0}(1-\alpha)^{s}}_{\nearrow 1}
    \\
    &\overset{t \to \infty}{\longrightarrow} \frac 1 {2-\alpha}.
\end{align*}
Hence,
\begin{equation*}
    \mathbb{V}[\lim_{t\to\infty}\bar\phi_t^{(S)}] \leq \mathcal O(\alpha) + 2 \sigma_2^2 \frac 2 {2-\alpha} \frac p {2-p} = \mathcal O(\alpha) + \mathcal O(p).
\end{equation*}

\end{proof}

\begin{theorem}[Bias for changing Model]\label{thm::changing-h-bias}
If $\Delta(h_s,h_t) \leq \delta$ and $\Delta_S(h_s,h_t) \leq \delta_S$ for $t_0 \leq s \leq t$, then
\begin{equation*}
 \vert \bar\phi^{(S_j)}_t - \phi^{(S_j)}(h_t)\vert \leq \delta_S + \delta +\mathcal O((1-\alpha)^{t}).   
\end{equation*}
\end{theorem}

\begin{proof}
We again consider the more general estimator $\tilde \phi^{(S)}_t := \mathbb{E}_{\varphi}[\sum_{s=t_0}^t w_s \hat\lambda^{(S)}_t(x_t,x_{\varphi_t},y_t)]$ and prove a more general result.

\begin{proposition}
If $\Delta(h_s,h_t) \leq \delta$ and $\Delta_S(h_s,h_t) \leq \delta_S$ for $t_0 \leq s \leq t$, then
$\vert \mathbb{E}[\hat\phi^{(S)}_t] - \phi^{(S)}(h_t)\vert \leq \mu_w (\delta_S + \delta) + \vert(1-\mu_w) \phi^{(S)}(h_t)\vert$.
\end{proposition}

\begin{proof}
For the proof, we first show that for two models $h_s,h_t$ and a subset $S \subset D$, it holds that
$\vert\phi^{(S)}(h_t)-\phi^{(S)}(h_s)\vert \leq \Delta_S(h_s,h_t) + \Delta(h_s,h_t)$.
This follows directly from the reverse triangle inequality for $f_S^\Delta(x^{(\bar S)},h_s,h_t) \geq \mathbb{E}_{\tilde X}[ \Vert h_t(x^{(\bar S)},\tilde X)-y\Vert - \Vert y-h_s(x^{(\bar S)},\tilde X)\Vert]$.
The result then follows directly by definition, the observation that $\hat\lambda_s^{(S)}$ is an unbiased estimate of $\phi^{(S)}(h_s)$, as
\begin{align*}
    \vert \mathbb{E}[\bar\phi^{(S)}_t] - \phi^{(S)}(h_t) \vert 
    &= \vert (\sum_{s=t_0}^t w_s \phi^{(S)}(h_s)) - \phi^{(S)}(h_t)\vert
    \\
    &\leq \sum_{s=t_0}^t w_s\underbrace{\vert \phi^{(S)}(h_s) - \phi^{(S)}(h_t)\vert}_{\leq \delta + \delta_S} 
    \\
    &+ \vert(\sum_{s=t_0}^t w_s-1)\phi^{(S)}(h_t) \vert
    \\
    &\leq \mu_w(\delta+\delta_S) + \underbrace{\vert(1-\mu_w)\phi^{(S)}(h_t)\vert}_{\text{bias for static model}}.
\end{align*}
\end{proof}
With $\mu_w = 1 - (1-\alpha)^{t-t_0+1}$ our special case follows immediately.
\end{proof}

\begin{theorem}[Variance for changing Model]\label{thm::changing-h-variance}
If
\begin{equation}\label{eq::bound-variability}
    \text{cov}(f_s(Z_s,Z_r),f_{s'}(Z_{s'},Z_{r'})) 
    \leq \sigma_{\text{max}}^2
\end{equation}
for $t_0\leq s,s' \leq t$, $r<s$ and $r'<s'$, then for a sequence of models $(h_t)_{t\geq0}$ the results of Theorem \ref{thm::constant-h-variance} apply.
\end{theorem}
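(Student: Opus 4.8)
The plan is to retrace the proof of Theorem~\ref{thm::constant-h-variance}, replacing the static kernel $f$ by the time-indexed kernel $f_s$ and checking that the one structural fact the static argument relied upon---that covariances with four distinct indices vanish---survives intact. Throughout I treat the model sequence $(h_t)$ as fixed (equivalently, I condition on it), so that, given $h_s$, the quantity $f_s(Z_s,Z_r)$ is a measurable function of the iid pair $(Z_s,Z_r)$ alone. The randomness entering the variance is then carried entirely by the iid observations $(Z_n)$ and the independent sampling strategy $\varphi$, exactly as in the static case, and the monotonicity and independence conditions on $\varphi$ required by Proposition~\ref{thm::constant-h-variance-general} are properties of the sampler only and are therefore unaffected by the model change.

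First I would re-expand $\mathbb{V}[\tilde\phi_t^{(S)}]$ as the double sum over covariances $\text{cov}(f_s(Z_s,Z_r),f_{s'}(Z_{s'},Z_{r'}))$ weighted by $w_s w_{s'} p_{s,r} p_{s',r'}$, precisely as in Proposition~\ref{thm::constant-h-variance-general}. The combinatorial classification of the index tuples $(s,s',r,r')$ by the size of $\{s,s',r,r'\}$ is purely index-based and hence unchanged: I again separate the diagonal ($s=s',\,r=r'$), the three single-match configurations (cases~1--3), and the fully-distinct configuration.

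The crucial step is the fully-distinct case. Since the models are fixed, $f_s(Z_s,Z_r)$ depends only on $(Z_s,Z_r)$ and $f_{s'}(Z_{s'},Z_{r'})$ only on $(Z_{s'},Z_{r'})$; when $|\{s,s',r,r'\}|=4$ these pairs involve disjoint, hence independent, iid observations, so the covariance vanishes. This is essential: applying a uniform bound to \emph{every} term would only give $\mathbb{V}\leq\sigma_{\text{max}}^2\mu_w^2=\mathcal O(1)$, destroying the $\alpha$-dependence, so it is exactly the vanishing of the $\mathcal O(t^4)$ fully-distinct terms that preserves the favorable rate. For the remaining finitely many covariance types there is no longer a single variance $\sigma_2^2$, because $f_s$ varies with $s$; instead I invoke assumption~(\ref{eq::bound-variability}) directly, which bounds every surviving covariance by $\sigma_{\text{max}}^2$. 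Substituting $\sigma_{\text{max}}^2$ for $\sigma_2^2$, the counting bounds on the diagonal and on cases~1--3 carry over verbatim, yielding
\begin{equation*}
\mathbb{V}[\tilde\phi_t^{(S)}] \leq 4\sigma_w^2\sigma_{\text{max}}^2 + 2\sigma_{\text{max}}^2\sum_{s=t_0}^t\sum_{s'=t_0}^{s-1} w_s w_{s'}\,\mathcal I_\varphi(s).
\end{equation*}

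From here the argument is identical to the static case: specializing to $w_s=\alpha(1-\alpha)^{t-s}$, using $\sigma_w^2\nearrow\alpha/(2-\alpha)$, and inserting $\mathcal I_{\text{unif}}(s)=1/s$ respectively $\mathcal I_{\text{geom}}(s)=\mathcal O(p)$ reproduces the rates $\mathcal O(-\alpha\log\alpha)$ and $\mathcal O(\alpha)+\mathcal O(p)$ of Theorem~\ref{thm::constant-h-variance}. I expect the only genuine obstacle to be justifying that the fully-distinct covariances vanish under the time-varying kernels; once the models are treated as fixed this reduces to the same independence argument as before, and everything else is a mechanical substitution of $\sigma_{\text{max}}^2$ into an already-established counting bound.
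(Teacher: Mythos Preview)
Your proposal is correct and follows essentially the same approach as the paper: re-run the covariance decomposition of Proposition~\ref{thm::constant-h-variance-general} with the time-indexed kernel $f_s$, observe that the fully-distinct covariances still vanish by the iid assumption, and bound the surviving terms by $\sigma_{\text{max}}^2$ in place of $\sigma_2^2$. If anything, your version is more careful than the paper's terse proof, which displays a blanket bound over \emph{all} index tuples before invoking ``the iid assumption'' in a closing sentence; you make explicit that it is precisely the vanishing of the $|\{s,s',r,r'\}|=4$ terms that preserves the $\alpha$-rate.
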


\begin{proof}
In all proofs a changing model $h_t$ adds a time dependency on the function $f_s(Z_s,Z_r) := \Vert h_s(X_s^{(\bar S)},X_r^{(S)})-Y_s\Vert -\Vert h_s(X_s)-Y_s\Vert$. 
Instead of bounding the covariances by $\sigma_2^2$, we now bound the covariances of the time-dependent functions by $\sigma_{\text{max}}^2$.
This only directly affects Proposition \ref{thm::constant-h-variance-general}, as
\begin{align*}
\mathbb{V}[\bar\phi_t^{(S)}] &= \mathbb{V}[\sum_{s=t-N+1}^t w_s \sum_{r=0}^{s-1} p_{r,s} f_s(Z_s,Z_r)]
\\
&= \sum_{s,s'=t_0}^t w_s w_{s'} \sum_{r=0}^{s-1}\sum_{r'=0}^{s'-1} p_{s,r}p_{s',r'} \text{cov}((s,r),(s',r'))
\\
&\leq \sigma^2_{\text{max}}\sum_{s,s'=t_0}^t w_s w_{s'} \sum_{r=0}^{s-1}\sum_{r'=0}^{s'-1} p_{s,r}p_{s',r'}.
\end{align*}
All remaining arguments and proofs are still valid for a changing model due to the iid assumption.
\end{proof}

\section{Link to FI used in SAGE} 
Our definition of FI aligns with the definition of FI $v_h(S)$ given by \citet{Covert_Lundberg_Lee_2020}, which measures the \emph{increase in risk} when \emph{including} features in $S$ compared with marginalizing all features.
The conditional distribution $X^{(S)} \vert X^{(\bar S)}$ coincides with the marginal distribution if $X^{(\bar S)} \perp X^{(S)}$, which is often assumed in practice \cite{Covert_Lundberg_Lee_2021,Covert_Lundberg_Lee_2020,NIPS2017_7062}.
It was even suggested that the marginal distribution is conceptually the right choice \cite{pmlr-v108-janzing20a}.

\section{Approximation Error for expected PFI}
With $f(Z_n,Z_m) := \Vert h(X_n^{(\bar{S_j})},X_m^{(S_j)})-Y_n\Vert -\Vert h(X_n)-Y_n\Vert$ and symmetric U-statistic kernel $f_0(Z_n,Z_m):= \frac{f(Z_n,Z_m)+f(Z_m,Z_n)}{2}$, we can write
\begin{equation*}
    \bar\phi^{(S_j)} = \binom{N}{2}^{-1} \sum_{1\leq n < m \leq N} f_0(Z_n,Z_m),  
\end{equation*}
which is the basic form of a U-statistic and therefore the variance can be computed as
\begin{equation*}
\mathbb{V}\left[\bar\phi^{(S_j)}\right]= \binom{N}{2}^{-1}\sum_{c=1}^2 \binom{2}{c}\binom{N-2}{2-c}\sigma^2_c = \mathcal{O}(1/N),
\end{equation*}
where $\sigma_1^2 := \mathbb{V}_{Z_n}[\mathbb{E}_{Z_m}[f_0(Z_n,Z_m)]]$ and $\sigma_2^2 := \mathbb{V}[f_0(Z_n,Z_m)]$ are assumed to be finite \cite{10.1214/aoms/1177730196_hoeff_u}.
For $\epsilon > 0$, we then obtain by Chebyshev's inequality $ \mathbb{P}(\vert \bar\phi^{(S_j)} - \phi^{(S_j)}(h)\vert > \epsilon) = \mathcal{O}(1/N)$, as $\bar\phi^{(S_j)}$ is unbiased.

\section{Ground-truth PFI for the \emph{agrawal} stream.}

River \cite{montiel_river_2020} implements the \emph{agrawal} \cite{Agrawal.1993} data stream with multiple classification functions.
In our experiments we consider the following classification function (among others):
\begin{align*}
    \text{Class A:\ } & ((\text{age} < 40) \land (50K \leq \text{salary} \leq 100K ))\ \lor \\
    & ((40 \leq \text{age} < 60) \land (75K \leq \text{salary} \leq 125K ))\ \lor \\
    & ((\text{age} \geq 60) \land (25K \leq \text{salary} \leq 75K ))
\end{align*}
Both feature \emph{age} and \emph{salary} are uniformly distributed with $X^{(\text{age})} \sim \mathcal{U}_{[20, 80]}$ and $X^{(\text{salary})} \sim \mathcal{U}_{[20, 150]}$.
Given iid. samples from the data stream the classification problem can be transformed into a two-dimensional problem following the above defined classification function.
The two-dimensional classification problem is illustrated in Figure.~\ref{fig:theoretical_pfi}.
A sample is classified as concept $A$ when it occurs contained in $A_1$, $A_2$, or $A_3$.
Otherwise the sample is classified as concept $B$.

The theoretical PFIs can be calculated with the base probability of an sample belonging to concept A ($P(A_1) = P(A_2) = P(A_3) = \frac{5}{39}$) times the probability of switching the class through changing a feature ($P(A_i \rightarrow B_{n,m})$) plus the vice versa for a sample originally belonging to concept $B$.

{
\small
\begin{align*}
    \phi^{(\text{age})} &= P(A_1) \cdot P(A_1 \rightarrow B_{11}) + P(B_{11}) \cdot P(B_{11} \rightarrow A_1)
    \\
    &+ P(A_2) \cdot P(A_2 \rightarrow B_{21}) + P(B_{21}) \cdot P(B_{21} \rightarrow A_2)
    \\
    &+ P(A_3) \cdot P(A_3 \rightarrow B_{31}) + P(B_{31}) \cdot P(B_{31} \rightarrow A_3) =
    \\
    &= \frac{5}{39} \cdot \frac{1}{3} + (\frac{5}{13} \cdot \frac{1}{3} ) \cdot \frac{1}{3}
    \\
    &+ 2 \cdot (\frac{5}{39} \cdot \frac{1}{2} + (\frac{5}{13}\cdot\frac{1}{3}+\frac{5}{13}\cdot\frac{1}{3}\cdot\frac{1}{2})\cdot\frac{1}{3}) \approx
    \\
    &\approx 0.3419
    \\
    \phi^{(\text{salary})} &= P(A_1) \cdot P(A_1 \rightarrow B_{12}) + P(B_{12}) \cdot P(B_{12} \rightarrow A_1)
    \\
    &+ P(A_2) \cdot P(A_2 \rightarrow B_{22}) + P(B_{22}) \cdot P(B_{22} \rightarrow A_2)
    \\
    &+ P(A_3) \cdot P(A_3 \rightarrow B_{32}) + P(B_{32}) \cdot P(B_{32} \rightarrow A_3) =
    \\
    &= 3 \cdot (\frac{5}{39} \cdot \frac{8}{13} + (\frac{8}{13} \cdot \frac{1}{3}) \cdot \frac{5}{13})\approx\\
    &\approx 0.4734
\end{align*}
}

\begin{figure}[h]
    \centering
    \includegraphics[width=0.9\columnwidth]{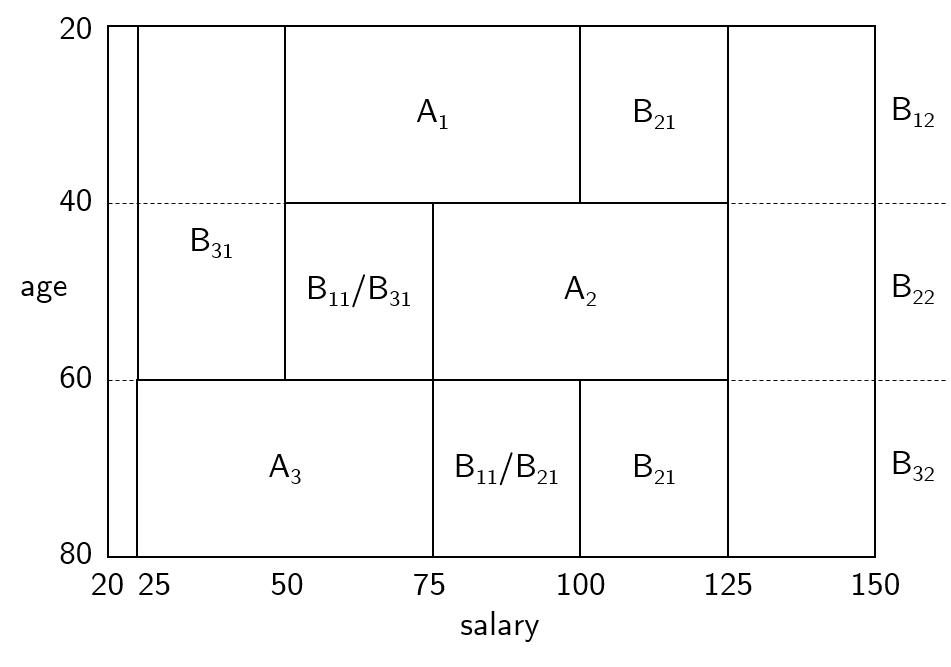}
    \caption{Two-Dimensional Classification Problem of the \emph{agrawal} Data Stream.}
    \label{fig:theoretical_pfi}
\end{figure}

\section{Experiments}
In the following, we give more comprehensive details about the datasets and models used in our experiments.

\subsection{Dataset Description}
\paragraph{adult \cite{adult_1996}}
Binary classification dataset that classifies 48842 individuals based on 14 features into yearly salaries above and below 50k.
There are six numerical features and eight nominal features.

\paragraph{bank \cite{moro2011using}}
Binary classification dataset that classifies 45211 marketing phone calls based on 17 features to decide whether they decided to subscribe a term deposit.
There are seven numerical features and ten nominal features.

\paragraph{bike \cite{fanaee2014event}}
Regression dataset that collects the number of bikes in different bike stations of Toulouse over 187470 time stamps.
There are six numerical features and two nominal features.

\paragraph{elec2 \cite{Harries99splice-2comparative}}
Binary classification dataset that classifies, if the electricity price will go up or down.
The data was collected for 45312 time stamp from the Australian New South Wales Electricity Market and is based on eight features, six numerical and two nominal.

\paragraph{agrawal \cite{Agrawal.1993}}
Synthetic data stream generator to create binary classification problems to decide whether an indivdual will be granted a loan based on nine features, six numerical and three nominal.
There are ten different decision functions available.

\paragraph{stagger \cite{DBLP:journals/ml/SchlimmerG86}}
The \emph{stagger} concepts makes a simple toy classification data stream. The syntethtical data stream generator consists of three independent categorical features that describe the \emph{shape}, \emph{size}, and \emph{color} of an artificial object. Different classification functions can be derived from these sharp distinctions.

\subsection{Model Description}
All models are implemented with the default parameters from \textit{scikit-learn} \cite{scikit-learn} and \textit{River} \cite{montiel_river_2020} unless otherwise stated.

\paragraph{ARF}
The Adaptive Random Forest Classifier (ARF) uses an ensemble of 50 trees with binary splits, ADWIN drift detection and information gain split criterion.
We used the default implementation \emph{AdaptiveRandomForestClassifier} from River with n\_models=50 and binary\_split=True.

\paragraph{NN}
The Neural Network classifier (NN) was implemented with two hidden layers of size $128 \times 64$, ReLu activation function and optimized with stochastic gradient descent (ADAM). We used the default implementation \emph{MLPClassifier} from scikit-learn.

\paragraph{GBT}
The Gradient Boosting Tree (GBT) uses 200 estimators and additively builds a decision tree ensemble using log-loss optimization.
We used the GradientBoostingClassifier from scikit-learn with n\_estimators=200.

\paragraph{LGBM}
The LightGBM (LGBM) constitutes a more lightweight implementation of GBT.
We used HistGradientBoostingRegressor for regression tasks and HistGradientBoostingClassifier for classification tasks from scikit-learn with the standard parameters.

\subsection{Hardware Details}
The experiments were mainly run on an computation cluster on hyperthreaded Intel Xeon E5-2697 v3 CPUs clocking at with 2.6Ghz.
In total the experiments took around 300 CPU hours (30 CPUs for 10 hours) on the cluster.
This mainly stems from the number of parameters and different initializations. 
Before running the experiments on the cluster, the implementations were validated on a Dell XPS 15 9510 containing an Intel i7-11800H at 2.30GHz. 
The laptop was running for around 12 hours for this validation. 

\subsection{Summary of experiments}
Table~\ref{tab:summary_conept_drift_appendix} contains summary information about the supplementary experiments. 
Table~\ref{tab:summary_conept_drift_main} contains additional information for Experiment B and C in the main contribution.

\begin{table*}[]
\centering
{
\setlength{\tabcolsep}{3pt}
\renewcommand{\arraystretch}{1.25}
\small
\begin{tabular}{cccc|cccc|cccc|cccc}
\toprule
\multirow{2}{*}{\textbf{data}} & \multirow{2}{*}{\textbf{\begin{tabular}[c]{@{}c@{}}image\\ id\end{tabular}}} & \textbf{sampling} & \multirow{2}{*}{$\boldsymbol{\alpha}$} & \multicolumn{4}{c}{\textbf{whole stream}} & \multicolumn{4}{c}{\textbf{before drift}} & \multicolumn{4}{c}{\textbf{after drift}} \\
 &  & \textbf{strategy} &  & $\boldsymbol{Q_{2}}$ & \textbf{IQR} & $\boldsymbol{Q_{1}}$ & $\boldsymbol{Q_{3}}$ & $\boldsymbol{Q_{2}}$ & \textbf{IQR} & $\boldsymbol{Q_{1}}$ & $\boldsymbol{Q_{3}}$ & $\boldsymbol{Q_{2}}$ & \textbf{IQR} & $\boldsymbol{Q_{1}}$ & $\boldsymbol{Q_{3}}$ \\ \midrule
\multirow{24}{*}{\emph{agrawal}} & \multirow{4}{*}{fu. 1} & uniform & 0.001 & 0.050 & 0.054 & 0.025 & 0.079 & 0.075 & 0.018 & 0.063 & 0.080 & 0.024 & 0.028 & 0.009 & 0.038 \\
 &  & geometric & 0.001 & 0.052 & 0.060 & 0.024 & 0.084 & 0.071 & 0.020 & 0.068 & 0.088 & 0.022 & 0.023 & 0.014 & 0.037 \\
 &  & uniform & 0.01 & 0.047 & 0.075 & 0.021 & 0.096 & 0.098 & 0.020 & 0.091 & 0.111 & 0.018 & 0.011 & 0.017 & 0.029 \\
 &  & geometric & 0.01 & 0.040 & 0.080 & 0.027 & 0.107 & 0.116 & 0.044 & 0.078 & 0.122 & 0.027 & 0.008 & 0.021 & 0.029 \\ \cline{2-16} 
 & \multirow{4}{*}{fu. 2} & uniform & 0.001 & 0.067 & 0.060 & 0.050 & 0.110 & 0.064 & 0.023 & 0.047 & 0.070 & 0.072 & 0.065 & 0.058 & 0.123 \\
 &  & geometric & 0.001 & 0.063 & 0.066 & 0.044 & 0.110 & 0.059 & 0.026 & 0.041 & 0.067 & 0.074 & 0.071 & 0.051 & 0.122 \\
 &  & uniform & 0.01 & 0.111 & 0.135 & 0.061 & 0.196 & 0.208 & 0.088 & 0.153 & 0.240 & 0.058 & 0.016 & 0.052 & 0.069 \\
 &  & geometric & 0.01 & 0.103 & 0.088 & 0.071 & 0.159 & 0.166 & 0.101 & 0.140 & 0.241 & 0.070 & 0.028 & 0.059 & 0.087 \\ \cline{2-16} 
 & \multirow{4}{*}{fu. 2, early} & uniform & 0.001 & 0.067 & 0.123 & 0.035 & 0.158 & 0.110 & 0.060 & 0.080 & 0.140 & 0.064 & 0.105 & 0.032 & 0.137 \\
 &  & geometric & 0.001 & 0.066 & 0.132 & 0.036 & 0.168 & 0.116 & 0.067 & 0.082 & 0.149 & 0.063 & 0.106 & 0.032 & 0.138 \\
 &  & uniform & 0.01 & 0.069 & 0.113 & 0.052 & 0.165 & 0.217 & 0.043 & 0.195 & 0.238 & 0.066 & 0.042 & 0.046 & 0.088 \\
 &  & geometric & 0.01 & 0.078 & 0.103 & 0.055 & 0.157 & 0.187 & 0.020 & 0.177 & 0.196 & 0.069 & 0.042 & 0.050 & 0.092 \\ \cline{2-16} 
 & \multirow{4}{*}{fu. 2, late} & uniform & 0.001 & 0.071 & 0.106 & 0.045 & 0.151 & 0.051 & 0.031 & 0.042 & 0.072 & 0.244 & 0.231 & 0.163 & 0.394 \\
 &  & geometric & 0.001 & 0.081 & 0.105 & 0.051 & 0.156 & 0.061 & 0.042 & 0.041 & 0.082 & 0.246 & 0.230 & 0.170 & 0.400 \\
 &  & uniform & 0.01 & 0.117 & 0.093 & 0.066 & 0.159 & 0.139 & 0.087 & 0.069 & 0.156 & 0.095 & 0.091 & 0.071 & 0.162 \\
 &  & geometric & 0.01 & 0.103 & 0.115 & 0.063 & 0.178 & 0.128 & 0.106 & 0.065 & 0.170 & 0.077 & 0.101 & 0.063 & 0.163 \\ \cline{2-16} 
 & \multirow{4}{*}{fu. 3} & uniform & 0.001 & 0.079 & 0.071 & 0.037 & 0.108 & 0.097 & 0.026 & 0.086 & 0.111 & 0.032 & 0.037 & 0.016 & 0.053 \\
 &  & geometric & 0.001 & 0.081 & 0.078 & 0.035 & 0.113 & 0.095 & 0.036 & 0.084 & 0.119 & 0.029 & 0.036 & 0.017 & 0.053 \\
 &  & uniform & 0.01 & 0.097 & 0.108 & 0.053 & 0.161 & 0.149 & 0.044 & 0.134 & 0.178 & 0.056 & 0.009 & 0.051 & 0.060 \\
 &  & geometric & 0.01 & 0.124 & 0.067 & 0.087 & 0.153 & 0.142 & 0.022 & 0.135 & 0.157 & 0.090 & 0.038 & 0.074 & 0.112 \\ \cline{2-16} 
 & \multirow{4}{*}{fe. 1} & uniform & 0.001 & 0.048 & 0.102 & 0.018 & 0.121 & 0.021 & 0.036 & 0.017 & 0.054 & 0.087 & 0.177 & 0.043 & 0.220 \\
 &  & geometric & 0.001 & 0.035 & 0.091 & 0.015 & 0.106 & 0.023 & 0.031 & 0.018 & 0.049 & 0.047 & 0.117 & 0.008 & 0.125 \\
 &  & uniform & 0.01 & 0.062 & 0.044 & 0.034 & 0.077 & 0.072 & 0.023 & 0.056 & 0.079 & 0.046 & 0.037 & 0.030 & 0.067 \\
 &  & geometric & 0.01 & 0.044 & 0.052 & 0.035 & 0.087 & 0.079 & 0.047 & 0.042 & 0.089 & 0.043 & 0.014 & 0.032 & 0.046 \\ \midrule
\multirow{2}{*}{\emph{stagger}} & \multirow{2}{*}{fu. 1} & uniform & 0.001 & 0.018 & 0.118 & 0.014 & 0.132 & 0.009 & 0.005 & 0.007 & 0.012 & 0.132 & 0.443 & 0.075 & 0.518 \\
 &  & geometric & 0.001 & 0.018 & 0.117 & 0.015 & 0.131 & 0.008 & 0.006 & 0.005 & 0.011 & 0.131 & 0.440 & 0.075 & 0.515 \\ \midrule
\multirow{4}{*}{\emph{elec2}} & \multirow{2}{*}{fe. 1} & uniform & 0.001 & 0.270 & 0.305 & 0.042 & 0.347 & 0.041 & 0.061 & 0.033 & 0.093 & 0.353 & 0.068 & 0.311 & 0.378 \\
 &  & geometric & 0.001 & 0.037 & 0.039 & 0.033 & 0.072 & 0.037 & 0.066 & 0.032 & 0.098 & 0.037 & 0.022 & 0.036 & 0.057 \\ \cline{2-16} 
 & \multirow{2}{*}{fe. 1, gradual} & uniform & 0.001 & 0.158 & 0.263 & 0.050 & 0.313 & 0.048 & 0.075 & 0.025 & 0.101 & 0.321 & 0.089 & 0.283 & 0.372 \\
 &  & geometric & 0.001 & 0.037 & 0.024 & 0.027 & 0.051 & 0.040 & 0.069 & 0.026 & 0.095 & 0.037 & 0.013 & 0.028 & 0.041 \\ \bottomrule
\end{tabular}
\caption{Summary of additional concept drift experiments on \emph{agrawal}, \emph{stagger}, and \emph{elec2}. The image identifier point to the subsequent section of figures. $Q_2$ denotes the median of the error described in Experiment A computed for iPFI and interval PFI (solid line vs. dashed line in the Figures). The interquartile range is calculated between $Q_1$ and $Q_3$.}
\label{tab:summary_conept_drift_appendix}
}
\end{table*}

{
\setlength{\tabcolsep}{3pt}
\renewcommand{\arraystretch}{1.25}
\small
\begin{table*}[]
\centering
\begin{tabular}{@{}ccc|cccc|cccc|cccc@{}}
\toprule
\multirow{2}{*}{\textbf{data}} & \multirow{2}{*}{\textbf{exp.}} & \textbf{image} & \multicolumn{4}{c|}{\textbf{whole stream}} & \multicolumn{4}{c|}{\textbf{before drift}} & \multicolumn{4}{c}{\textbf{after drift}} \\
 &  & \textbf{facet} & $\boldsymbol{Q_{2}}$ & \textbf{IQR} & $\boldsymbol{Q_{1}}$ & $\boldsymbol{Q_{3}}$ & $\boldsymbol{Q_{2}}$ & \textbf{IQR} & $\boldsymbol{Q_{1}}$ & $\boldsymbol{Q_{3}}$ & $\boldsymbol{Q_{2}}$ & \textbf{IQR} & $\boldsymbol{Q_{1}}$ & $\boldsymbol{Q_{3}}$ \\ \midrule
\multirow{2}{*}{agrawal} & \multirow{2}{*}{B} & \begin{tabular}{c}function-\\drift\end{tabular} & 0.052 & 0.060 & 0.024 & 0.084 & 0.071 & 0.020 & 0.068 & 0.088 & 0.022 & 0.023 & 0.014 & 0.037 \\
 &  &\begin{tabular}{c}feature-\\drift\end{tabular}  & 0.035 & 0.091 & 0.015 & 0.106 & 0.023 & 0.031 & 0.018 & 0.049 & 0.047 & 0.117 & 0.008 & 0.125 \\ \midrule
\multirow{2}{*}{elec2} & \multirow{2}{*}{C} & \begin{tabular}{c}uniform\\sampling\end{tabular} & 0.270 & 0.305 & 0.042 & 0.347 & 0.041 & 0.061 & 0.033 & 0.093 & 0.353 & 0.068 & 0.311 & 0.378 \\
 &  & \begin{tabular}{c}geometric\\sampling\end{tabular} & 0.037 & 0.039 & 0.033 & 0.072 & 0.037 & 0.066 & 0.032 & 0.098 & 0.037 & 0.022 & 0.036 & 0.057 \\ \bottomrule
\end{tabular}
\caption{Summary of experiment B and C's iPFI's error against interval PFI (solid line vs. dashed line in the Figures).}
\label{tab:summary_conept_drift_main}
\end{table*}
}

\begin{figure*}[!th]
\begin{minipage}{\textwidth}
    \begin{minipage}[b]{0.49\textwidth}
        \centering
        \includegraphics[width=1\textwidth]{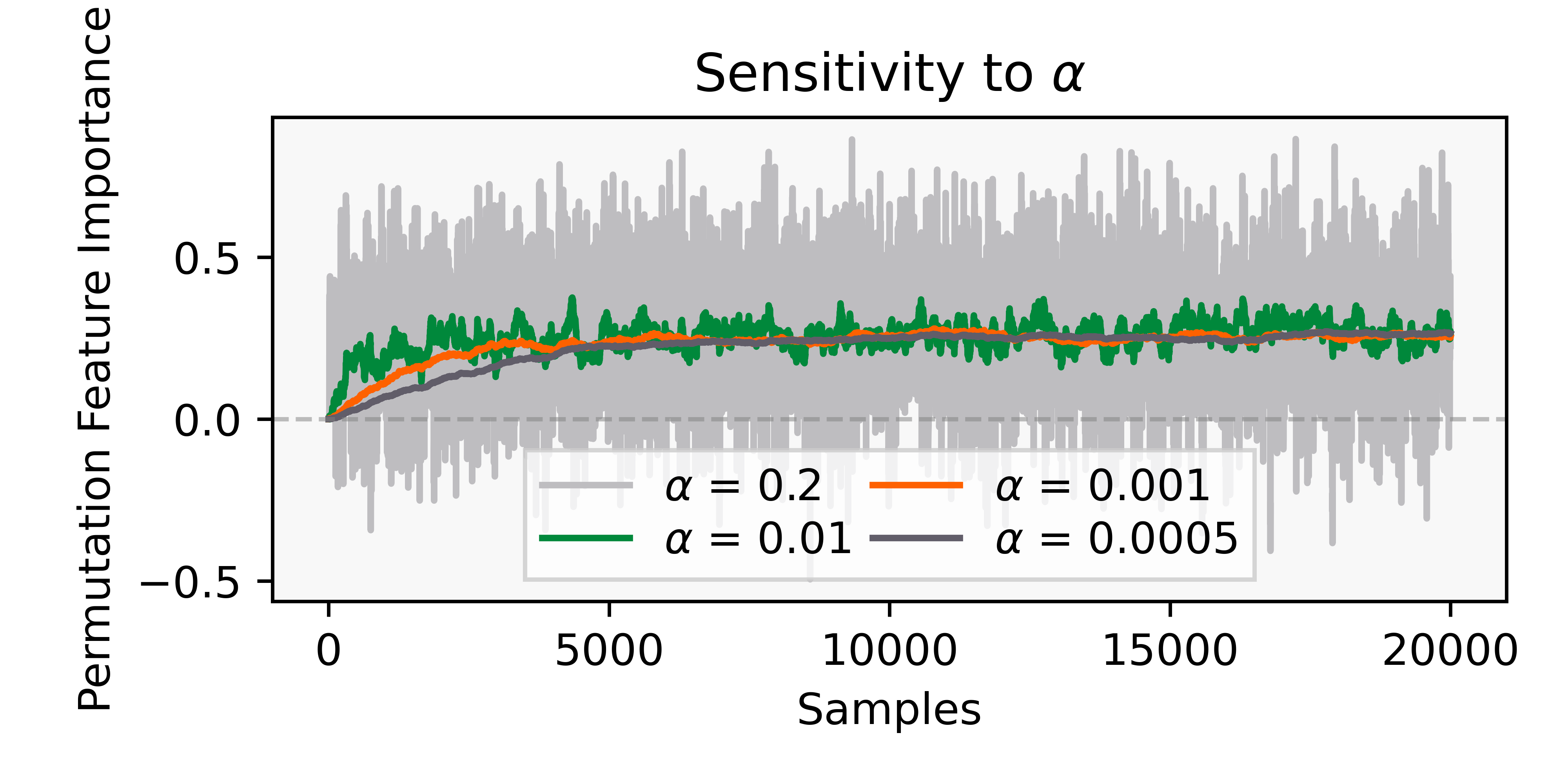}
    \end{minipage}
    \hfill
    \begin{minipage}[b]{0.49\textwidth}
        \includegraphics[width=1\textwidth]{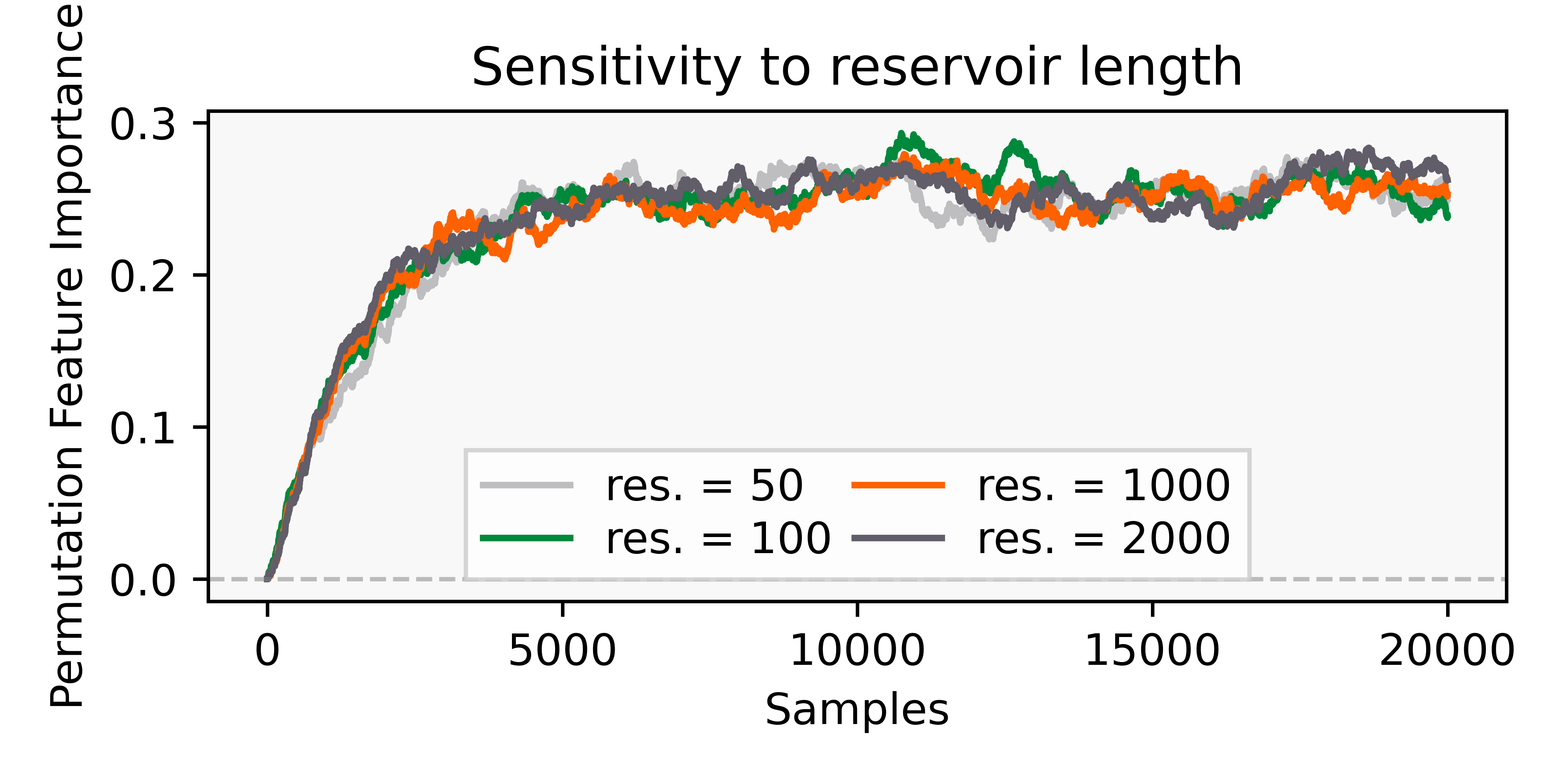}
    \end{minipage}
\end{minipage}
\caption{The importance of the \emph{nswprice} feature for an ARF model training on \emph{elec2} for different values of $\alpha$ (left) and reservoir length (right) iPFI parameters.}
\label{fig:parameter_analsis}
\end{figure*}

\subsection{Parameter Analysis}
In addition to the analysis of the sampling strategy, we further analyzed the effect of $\alpha$ smoothing parameter and the length of the reservoir used for sampling.
Figure~\ref{fig:parameter_analsis} shows how the $\alpha$ smoothing parameter has a substantial effect on the iPFI estimates.
The length of the reservoir has no  large effect on the estimates.
For both sensitivity analysis experiments, iPFI with geometric sampling is applied on an ARF and \emph{elec2}.
only the important \emph{nswprice} feature is plotted.
For each parameter value a single run is presented.

\begin{figure*}
    \centering
    \includegraphics[width=.82\textwidth]{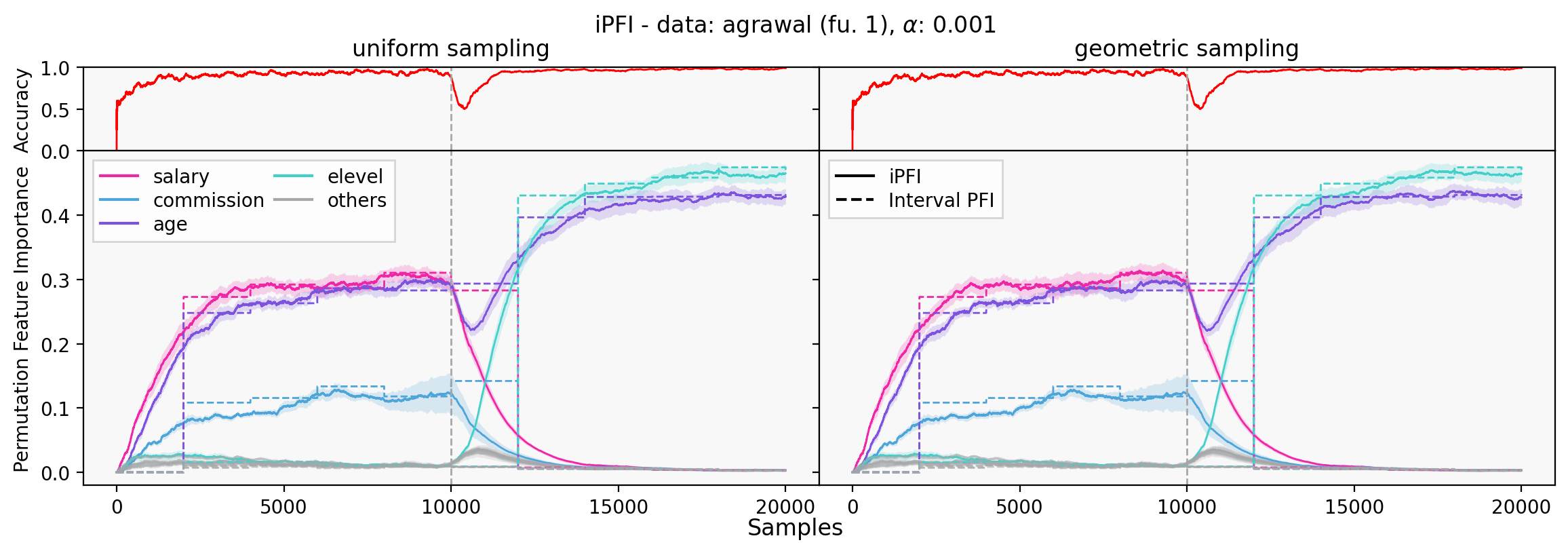}
    \includegraphics[width=.82\textwidth]{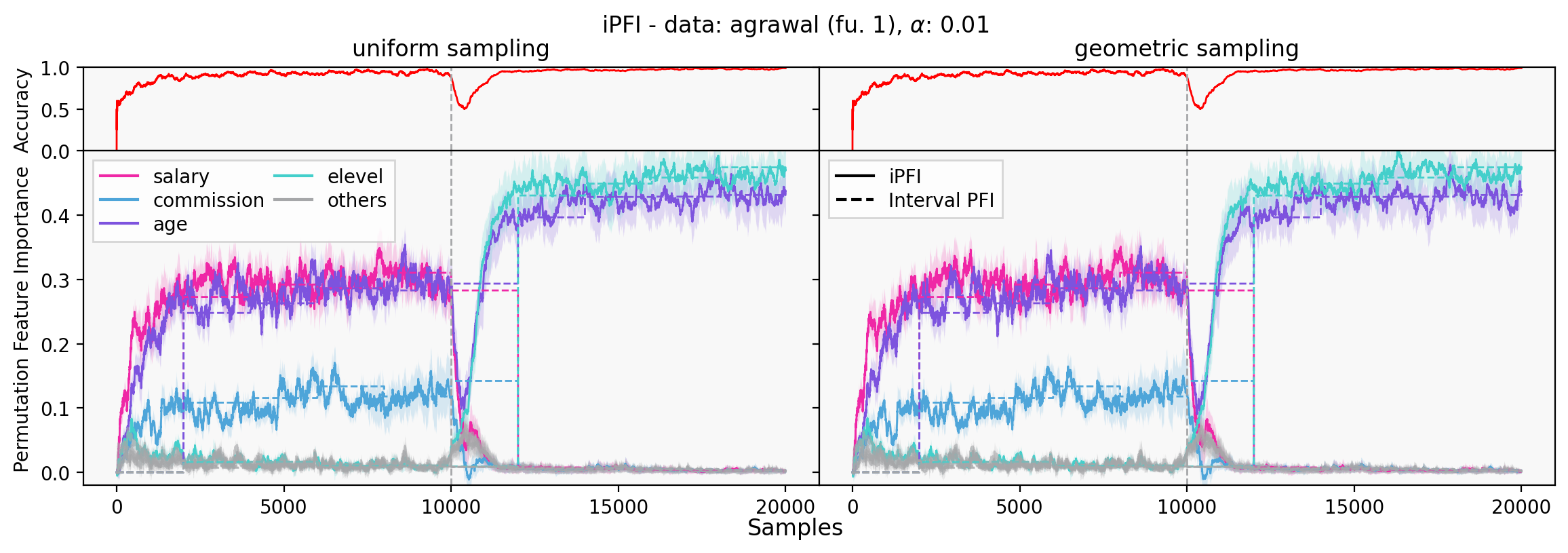}
    \caption{iPFI on \emph{agrawal} with a function-drift (fu. 1) after 10k samples with $\alpha = 0.001$ (top) and $\alpha = 0.01$ (bottom).}
    \label{fig:agrawal_fu. 1}
\end{figure*}

\begin{figure*}
    \centering
    \includegraphics[width=.82\textwidth]{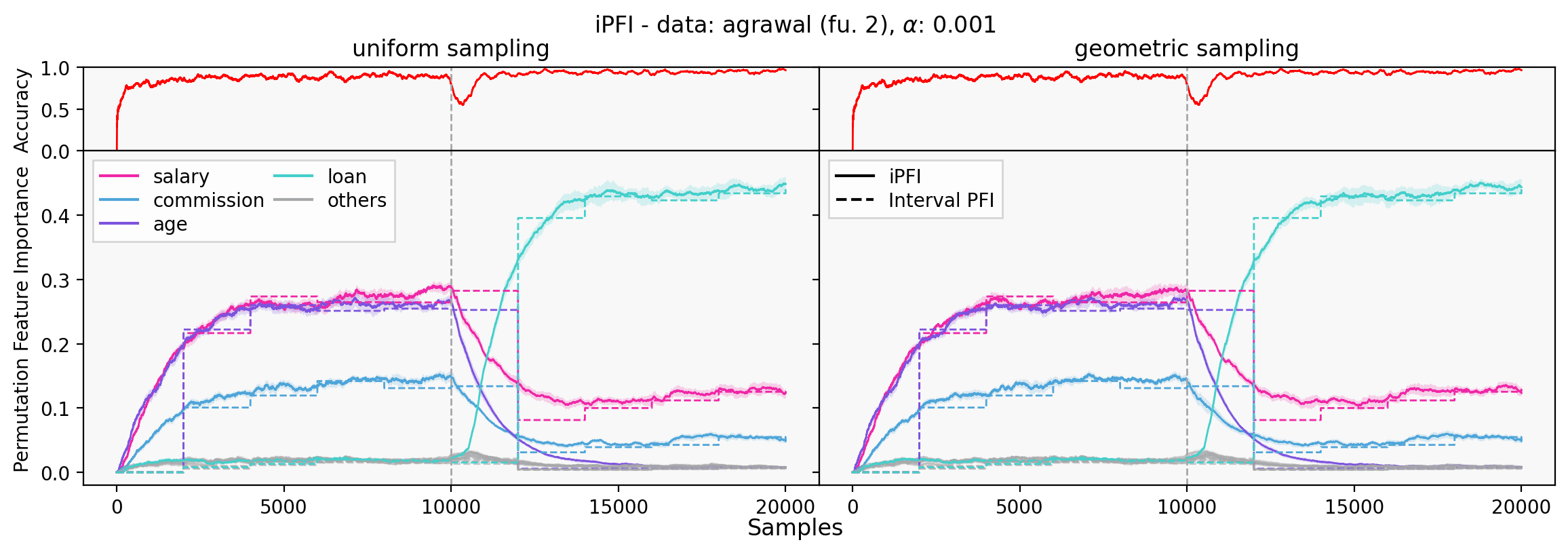}
    \includegraphics[width=.82\textwidth]{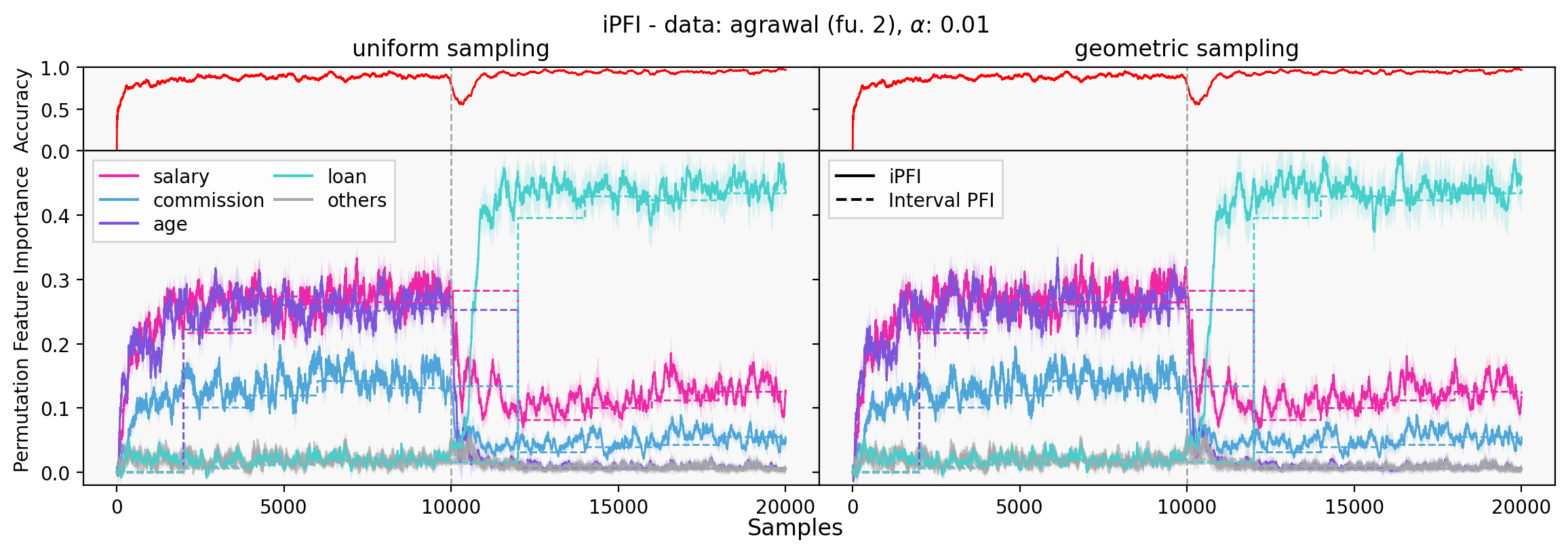}
    \caption{iPFI on \emph{agrawal} with a function-drift (fu. 2) after 10k samples with $\alpha = 0.001$ (top) and $\alpha = 0.01$ (bottom).}
    \label{fig:agrawal_fu. 2}
\end{figure*}

\begin{figure*}
    \centering
    \includegraphics[width=.82\textwidth]{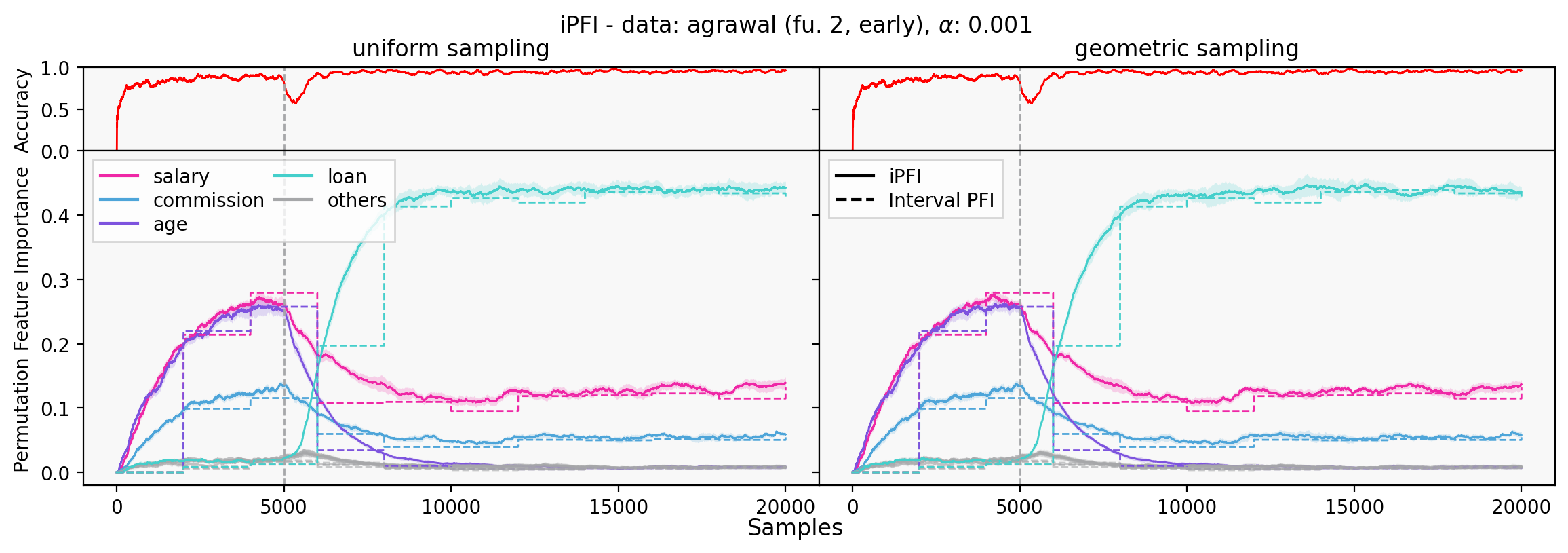}
    \includegraphics[width=.82\textwidth]{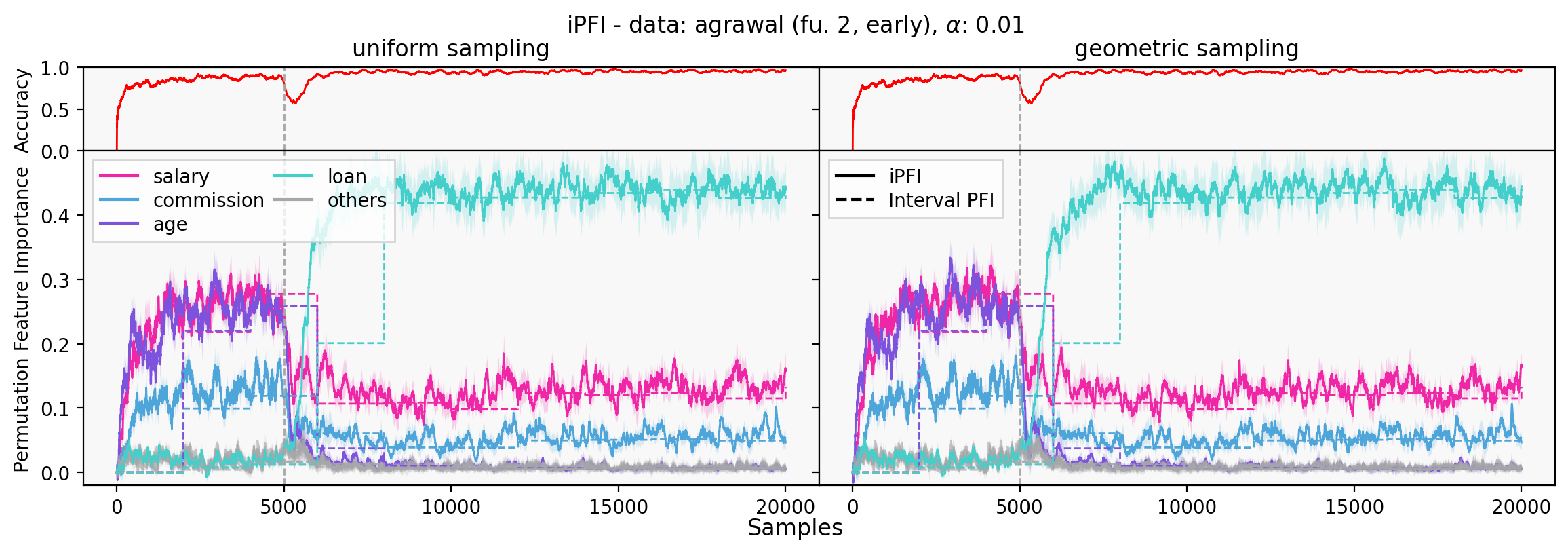}
    \caption{iPFI on \emph{agrawal} with a function-drift (fu. 2, early) after 5k samples with $\alpha = 0.001$ (top) and $\alpha = 0.01$ (bottom).}
    \label{fig:agrawal_fu. 2 early}
\end{figure*}

\begin{figure*}
    \centering
    \includegraphics[width=.82\textwidth]{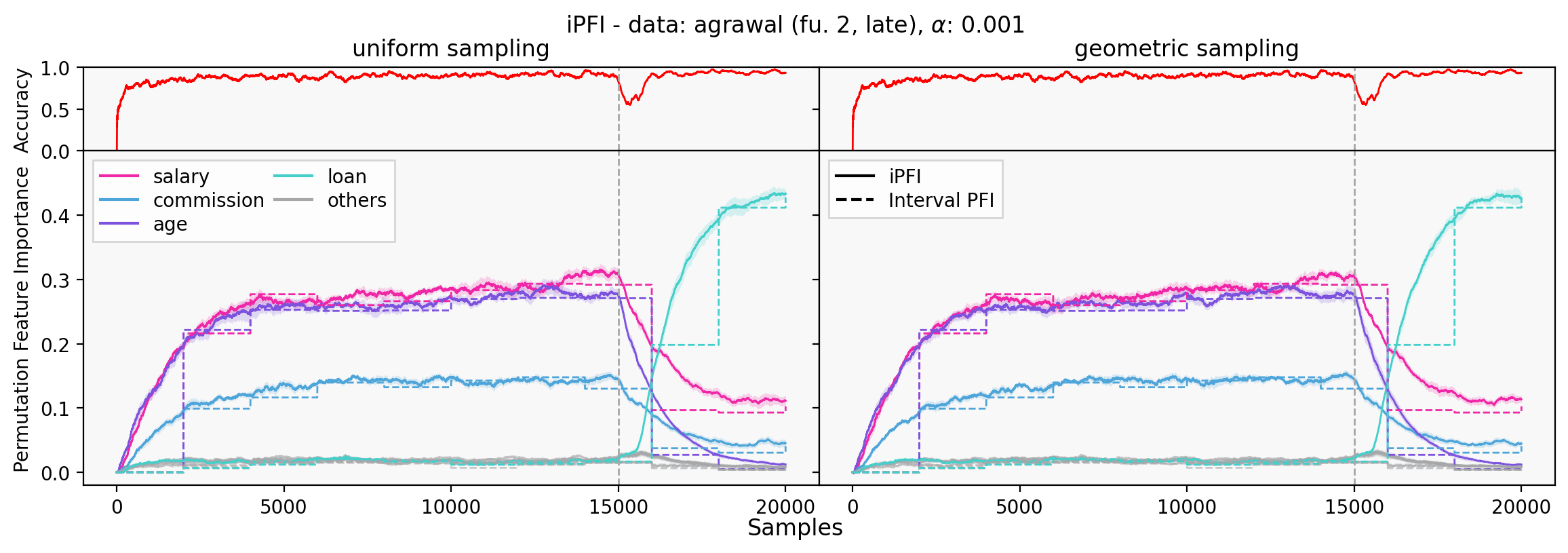}
    \includegraphics[width=.82\textwidth]{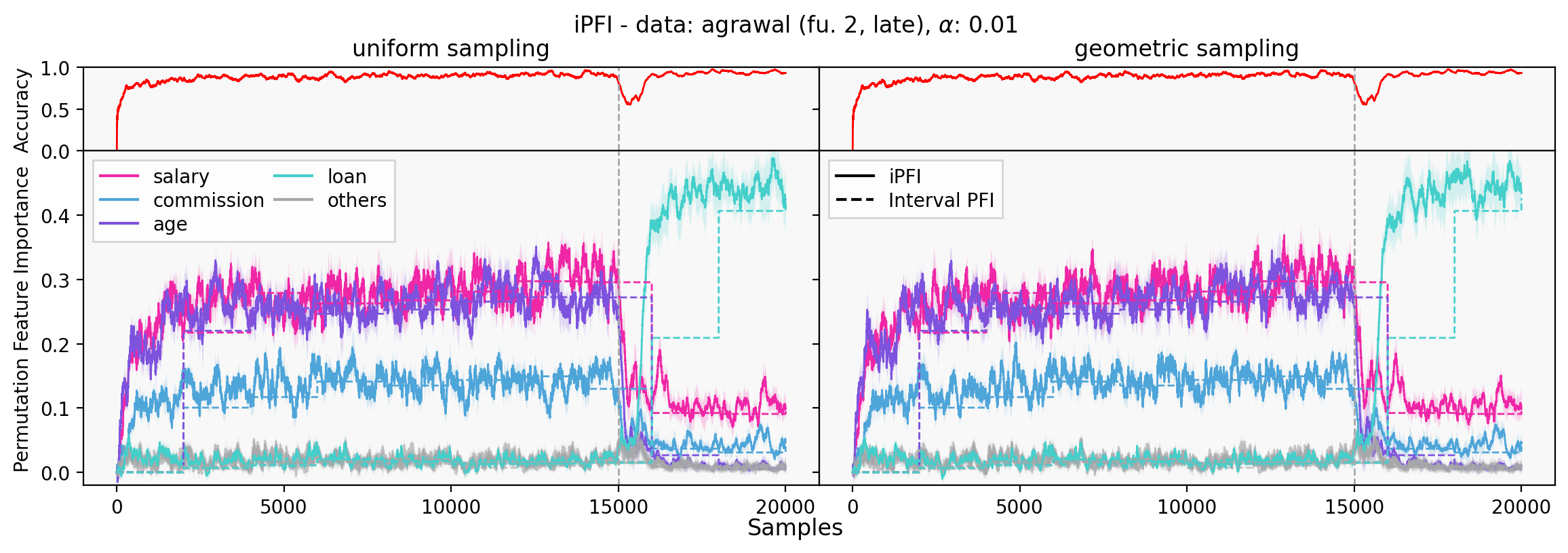}
    \caption{iPFI on \emph{agrawal} with a function-drift (fu. 2, late) after 15k samples with $\alpha = 0.001$ (top) and $\alpha = 0.01$ (bottom).}
    \label{fig:agrawal_fu. 2 late}
\end{figure*}

\begin{figure*}
    \centering
    \includegraphics[width=.82\textwidth]{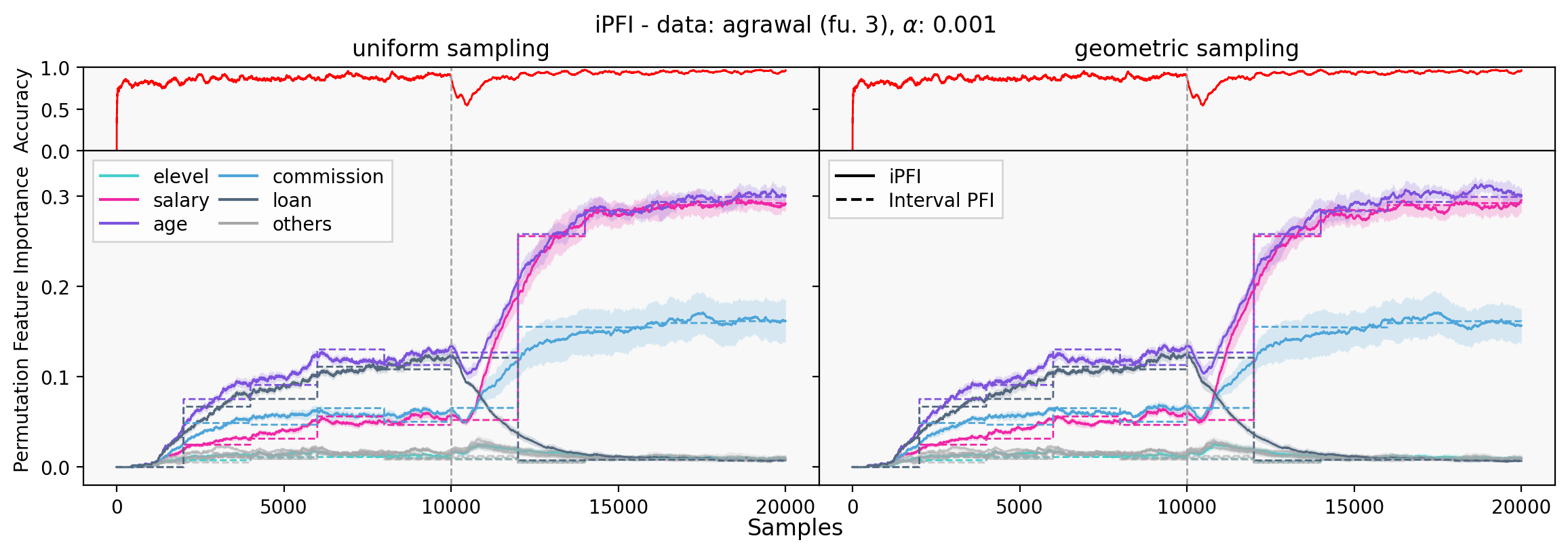}
    \includegraphics[width=.82\textwidth]{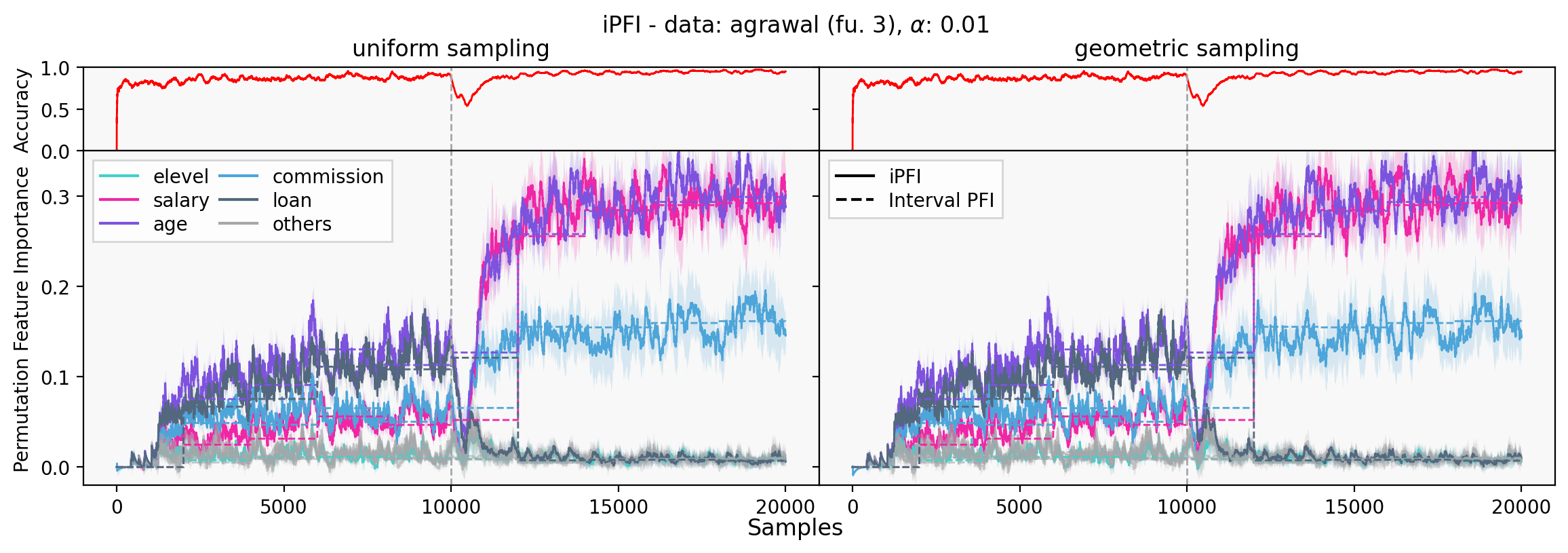}
    \caption{iPFI on \emph{agrawal} with a function-drift (fu. 3) after 10k samples with $\alpha = 0.001$ (top) and $\alpha = 0.01$ (bottom).}
    \label{fig:agrawal_fu. 3}
\end{figure*}

\begin{figure*}
    \centering
    \includegraphics[width=.8\textwidth]{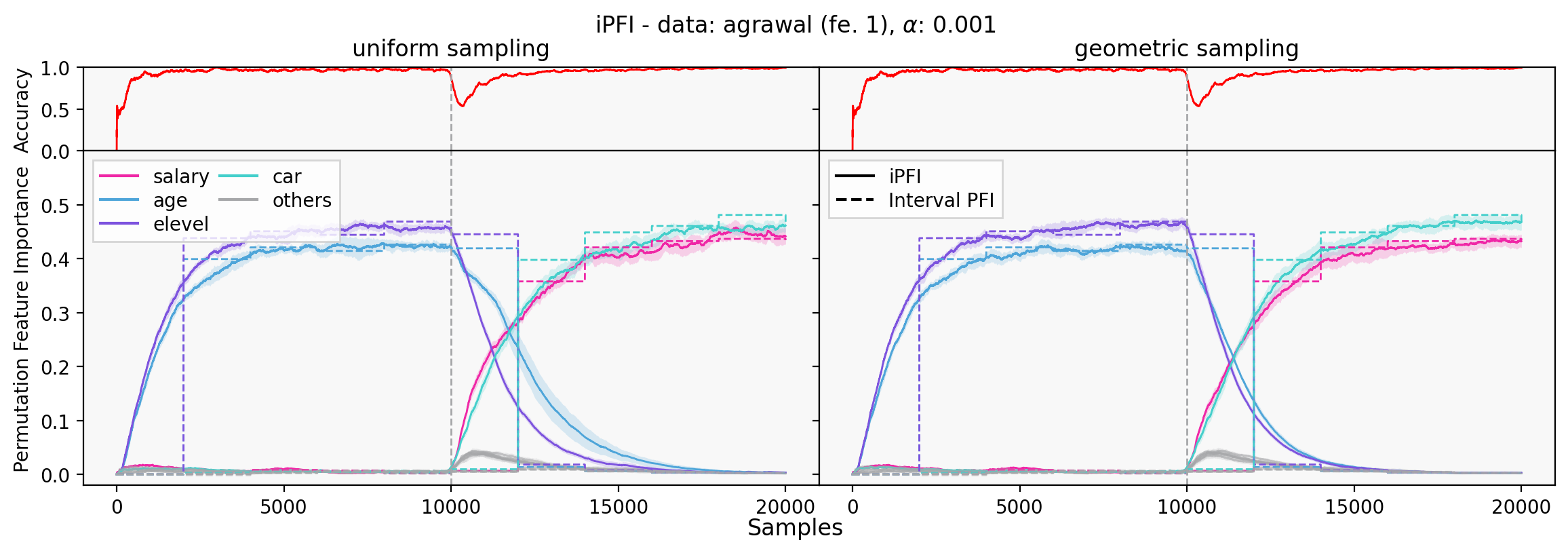}
    \includegraphics[width=.8\textwidth]{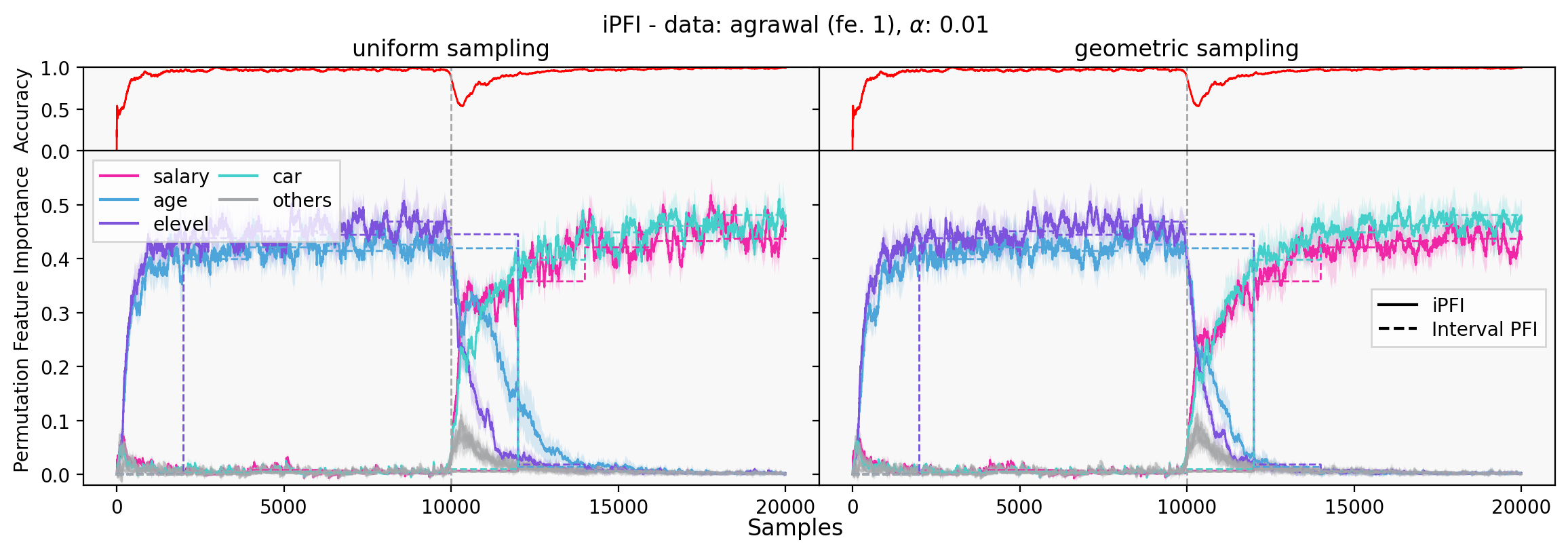}
    \caption{iPFI on \emph{agrawal} with a feature-drift (fe. 1) after 10k samples with $\alpha = 0.001$ (top) and $\alpha = 0.01$ (bottom).}
    \label{fig:agrawal_fe. 1}
\end{figure*}

\begin{figure*}
    \centering
    \includegraphics[width=.8\textwidth]{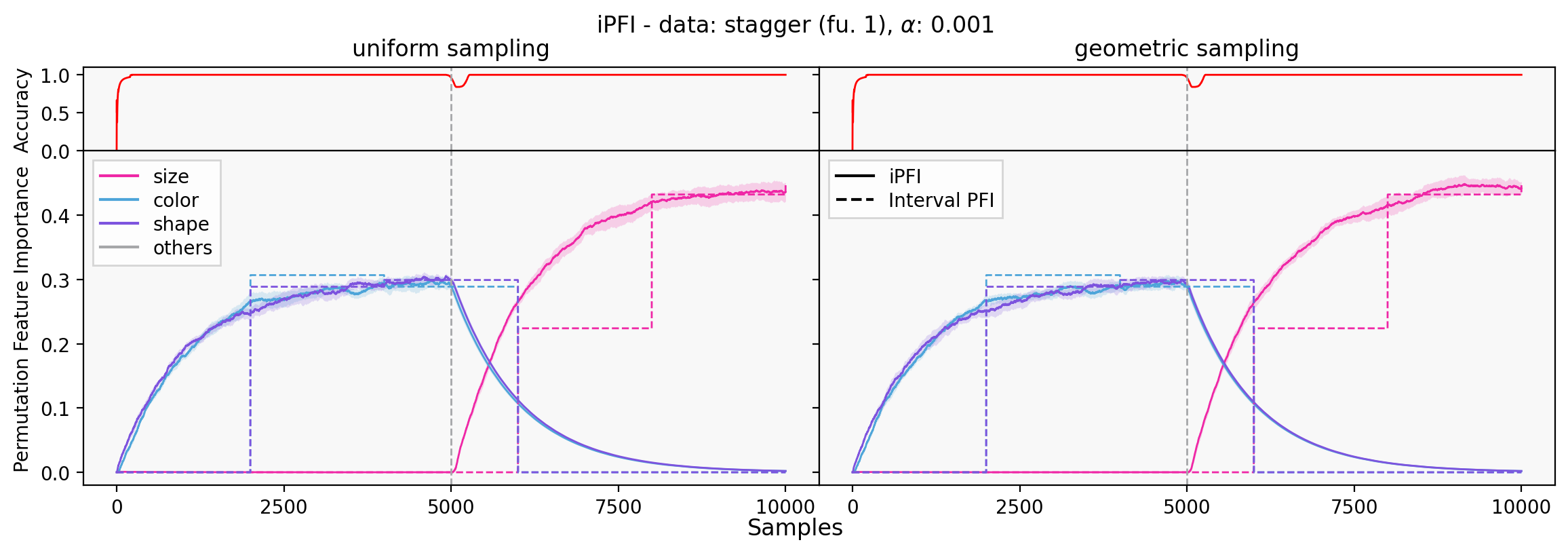}
    \caption{iPFI on \emph{stagger} with a feature-drift (fu. 1) after 5k samples with $\alpha = 0.001$.}
    \label{fig:stagger_fu. 1}
\end{figure*}

\begin{figure*}
    \centering
    \includegraphics[width=.8\textwidth]{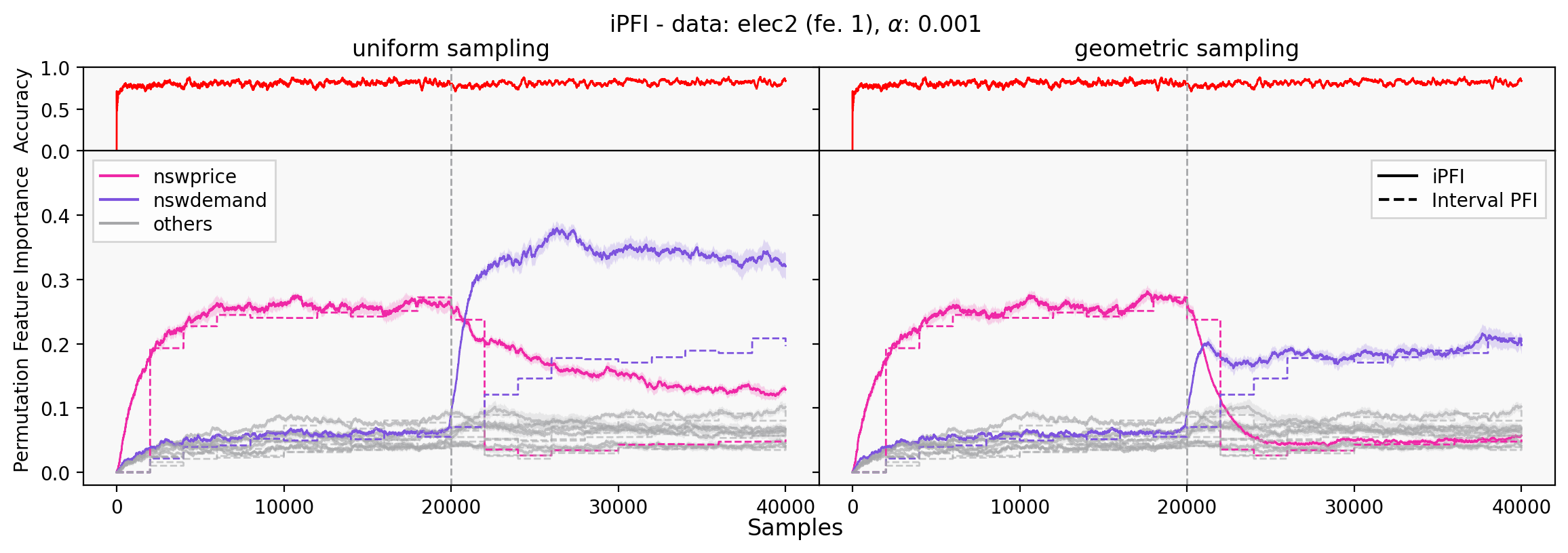}
    \includegraphics[width=.8\textwidth]{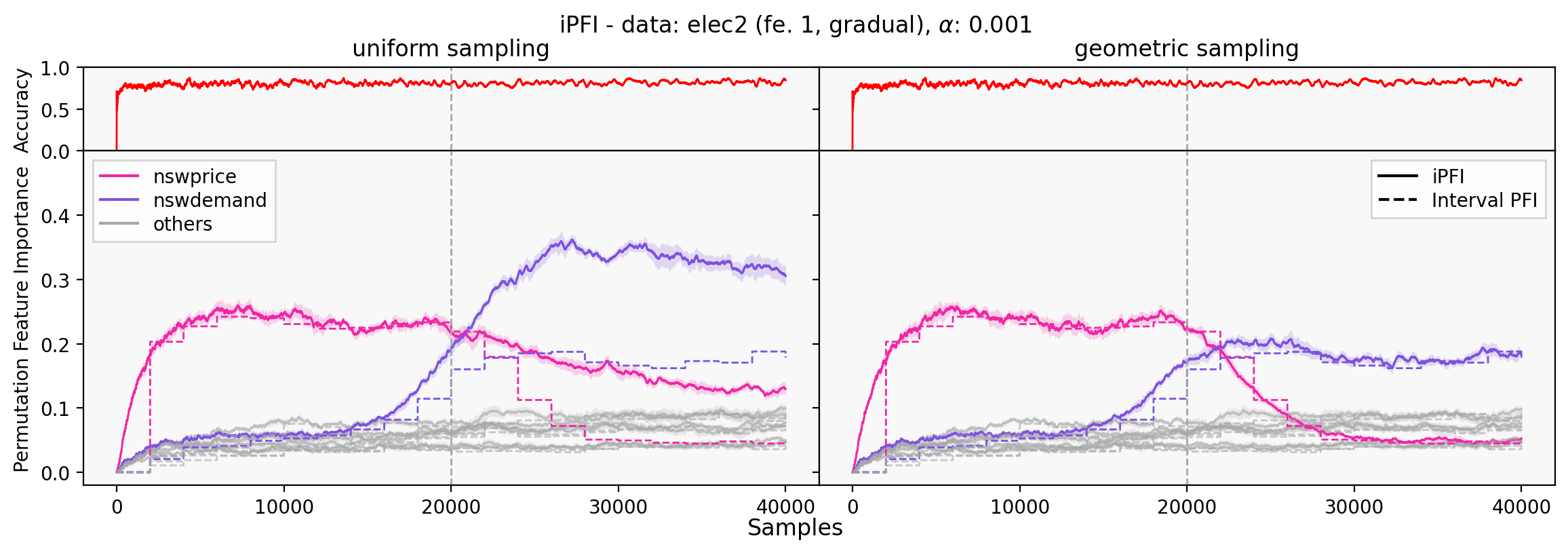}
    \caption{iPFI on \emph{elec2} with a sudden feature-drift (fe. 1) (top) and a gradual feature-drift (fe. 1, gradual) (bottom) after 20k samples with $\alpha = 0.001$.}
    \label{fig:elec2. fe. 1 and fe. 1, gradual}
\end{figure*}

\begin{figure*}
    \centering
    \includegraphics[width=.8\textwidth]{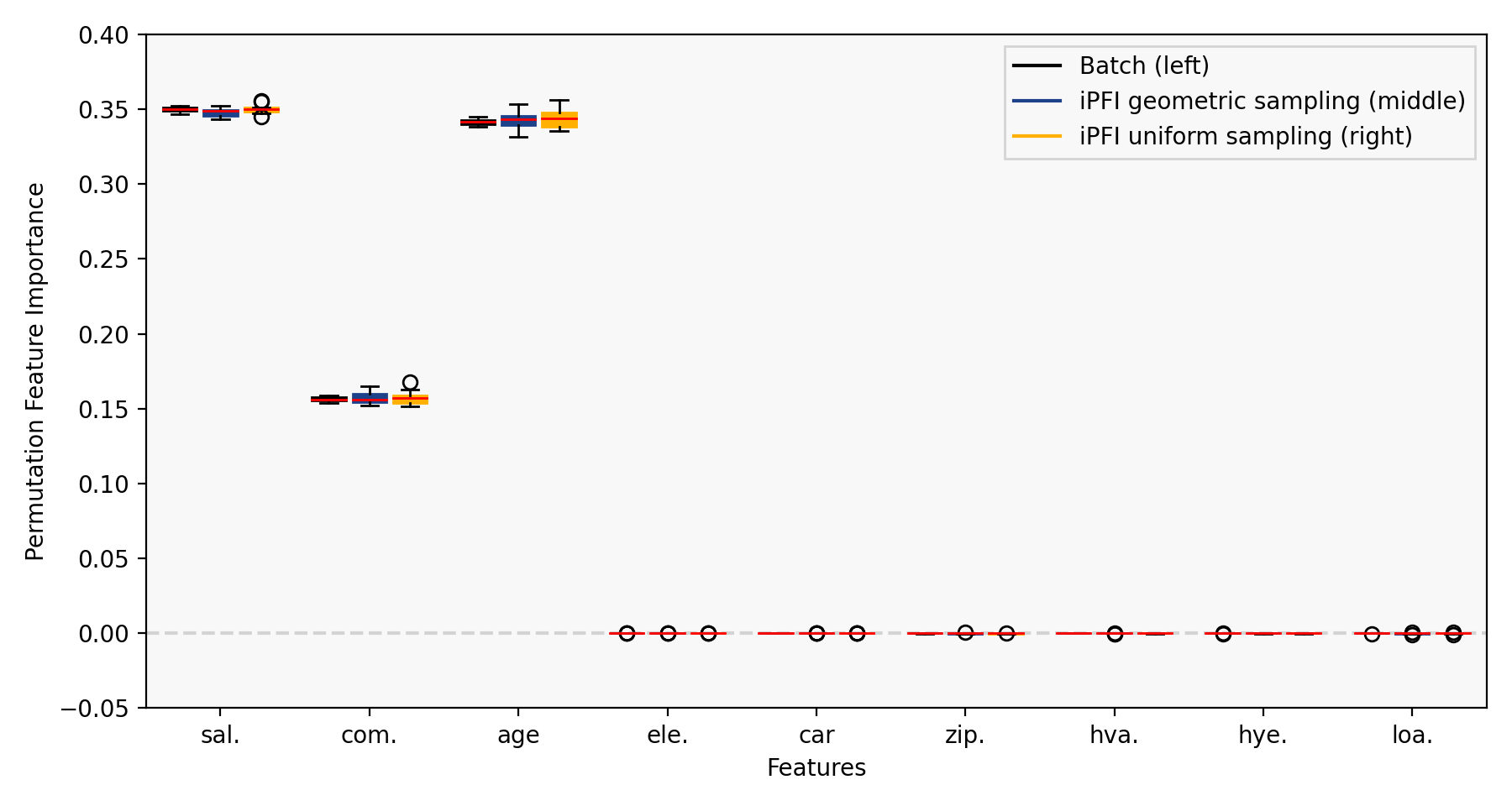}
    \caption{Boxplot of PFI estimates per feature of the \emph{agrawal} dataset for batch baseline (left), iPFI with geometric sampling (middle), and iPFI with uniform sampling (right) on a pre-trained static LGBM.}
    \label{fig:boxplot_adult}
\end{figure*}

\begin{figure*}
    \centering
    \includegraphics[width=.8\textwidth]{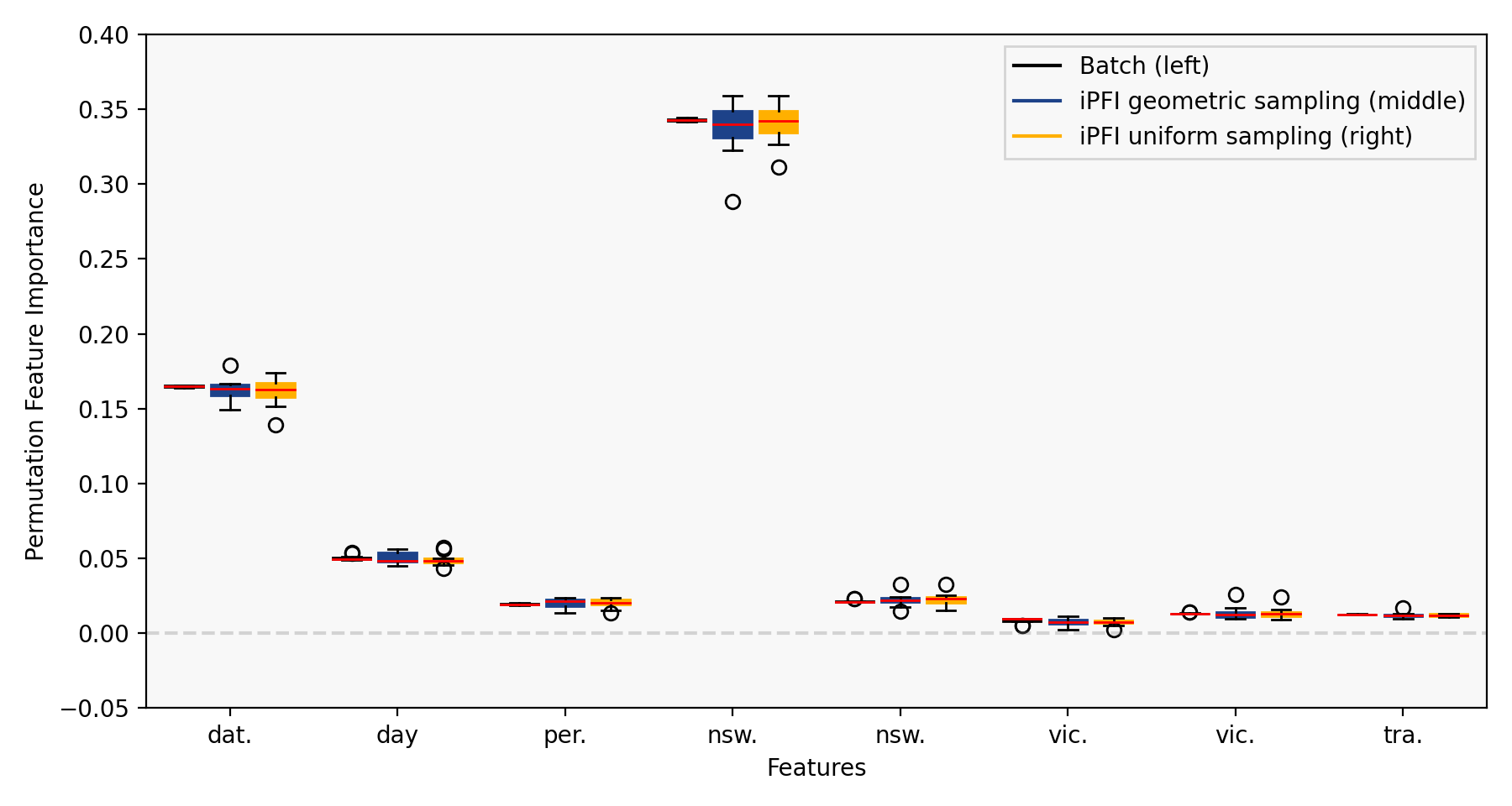}
    \caption{Boxplot of PFI estimates per feature of the \emph{elec2} dataset for batch baseline (left), iPFI with geometric sampling (middle), and iPFI with uniform sampling (right) on a pre-trained static LGBM.}
    \label{fig:boxplot_adult}
\end{figure*}

\begin{figure*}
    \centering
    \includegraphics[width=.8\textwidth]{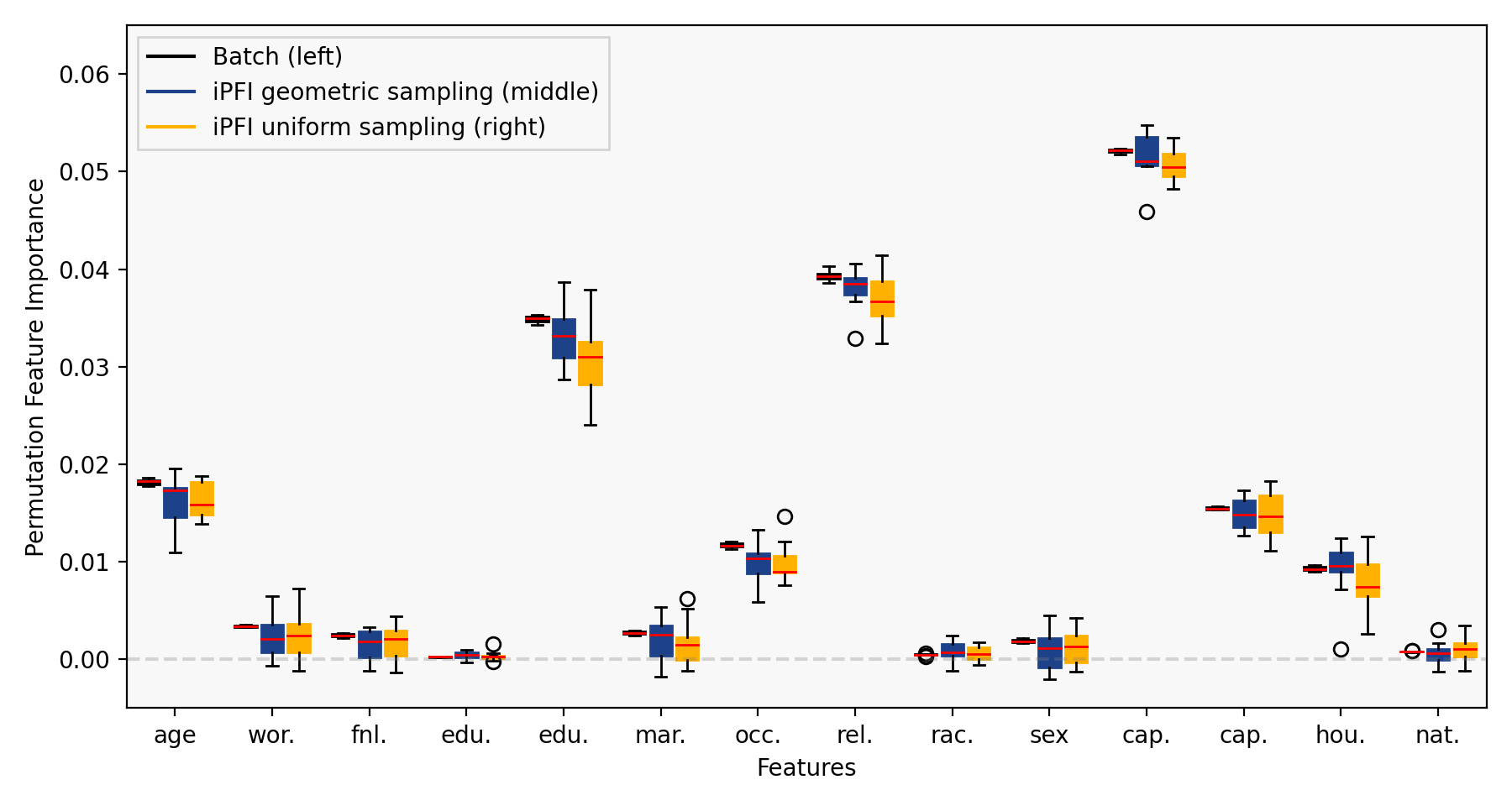}
    \caption{Boxplot of PFI estimates per feature of the \emph{adult} dataset for batch baseline (left), iPFI with geometric sampling (middle), and iPFI with uniform sampling (right) on a pre-trained static GBT.}
    \label{fig:boxplot_adult}
\end{figure*}

\begin{figure*}
    \centering
    \includegraphics[width=.8\textwidth]{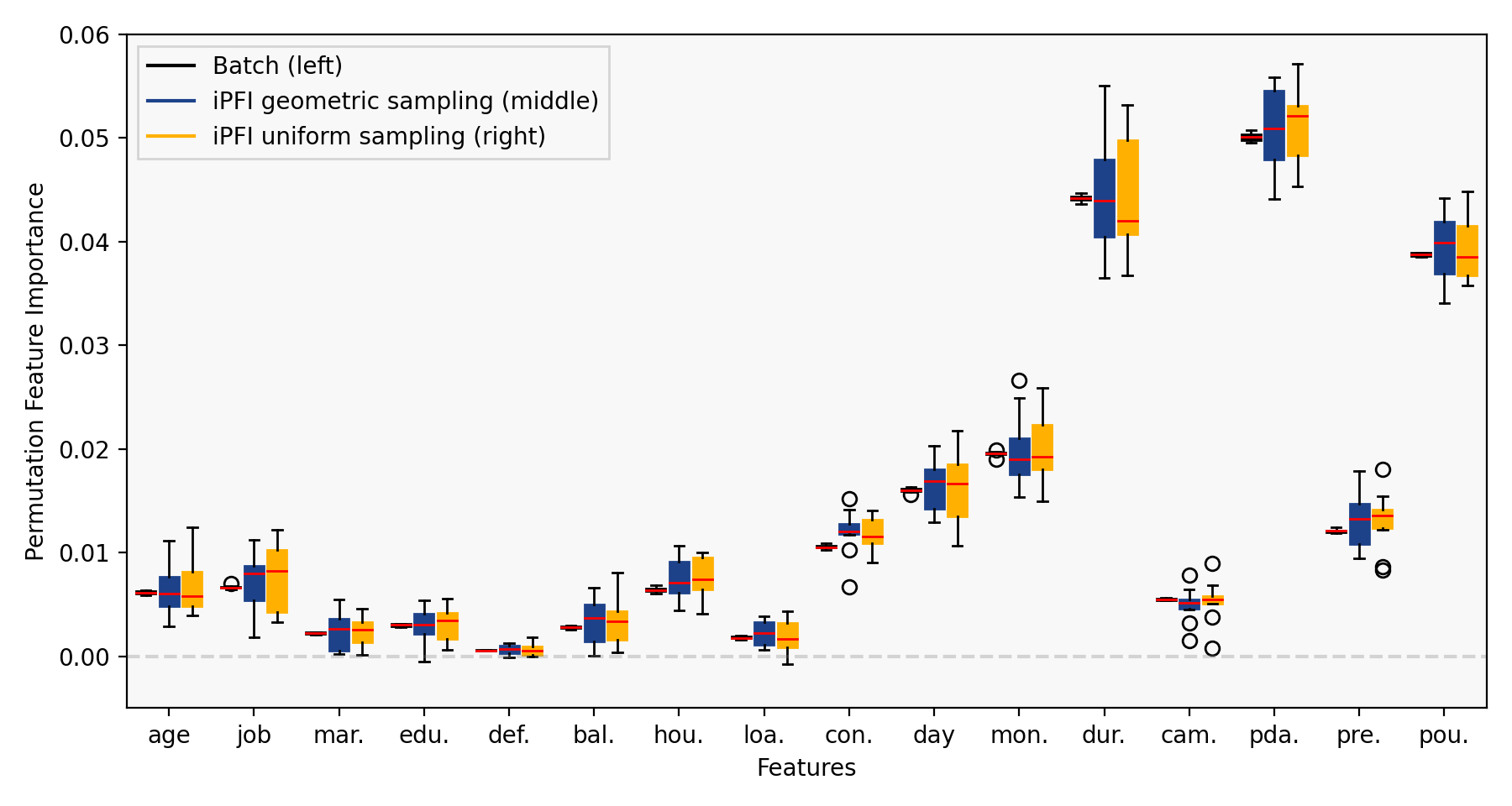}
    \caption{Boxplot of PFI estimates per feature of the \emph{bank} dataset for batch baseline (left), iPFI with geometric sampling (middle), and iPFI with uniform sampling (right) on a pre-trained static NN.}
    \label{fig:boxplot_adult}
\end{figure*}

\begin{figure*}
    \centering
    \includegraphics[width=.8\textwidth]{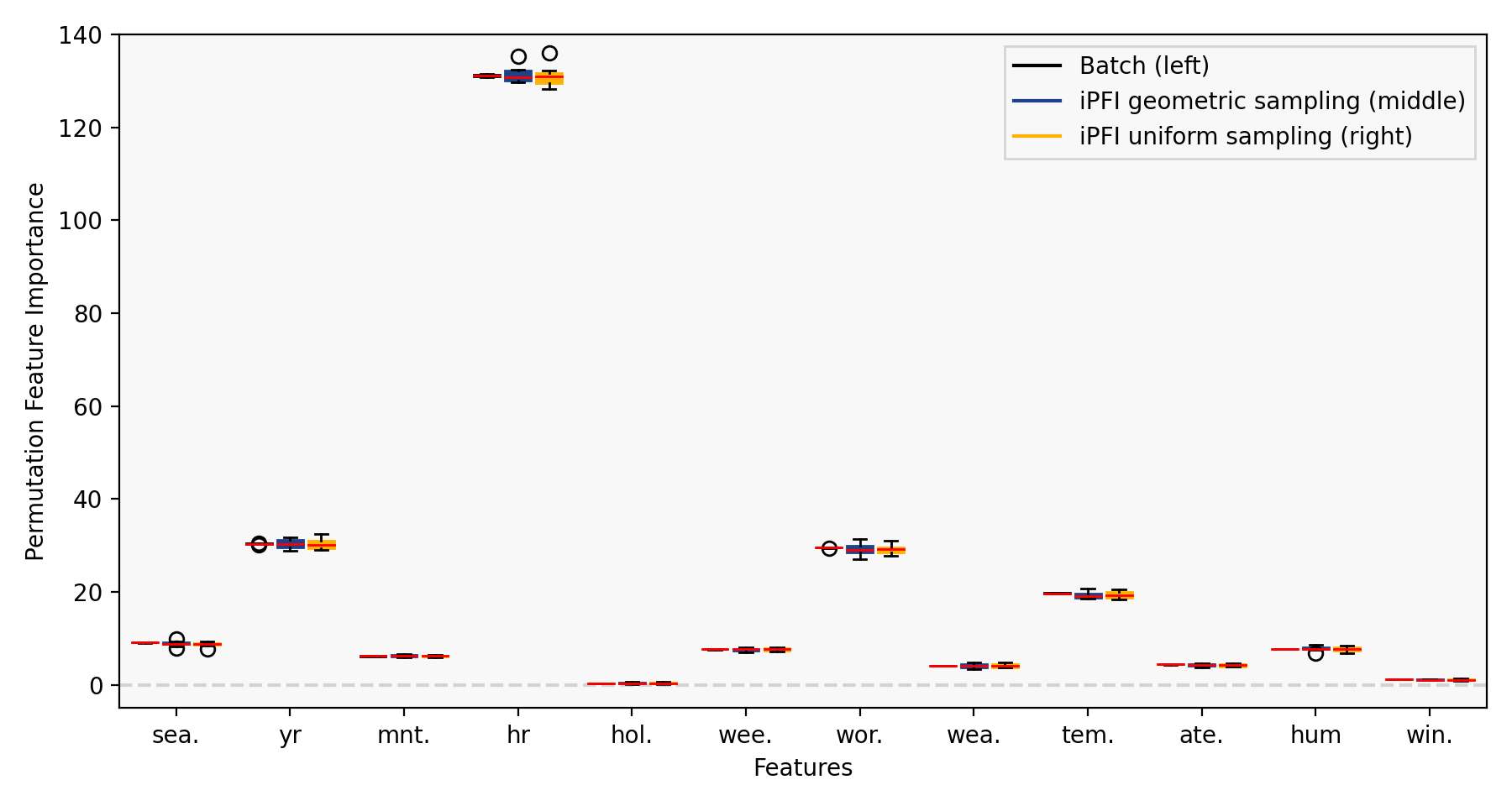}
    \caption{Boxplot of PFI estimates per feature of the \emph{bike} dataset for batch baseline (left), iPFI with geometric sampling (middle), and iPFI with uniform sampling (right) on a pre-trained static LGBM.}
    \label{fig:boxplot_adult}
\end{figure*}

\end{document}